\definecolor{cvprblue}{rgb}{0.21,0.49,0.74}
\title{From Correspondences to Pose:\\
Non-minimal Certifiably Optimal Relative Pose without Disambiguation}
\author{Javier Tirado-Garín \qquad Javier Civera\\
I3A, University of Zaragoza\\
{\tt\small \{j.tiradog,jcivera\}@unizar.es}
}
\newcommand{\PAR}[1]{\vskip4pt \noindent{\bf #1~}}
\newtheorem{theorem}{Theorem}[section]
\newcommand{\bA}{\mathbf{A}}
\newcommand{\ba}{\mathbf{a}}
\newcommand{\bB}{\mathbf{B}}
\newcommand{\bbb}{\mathbf{b}}
\newcommand{\bC}{\mathbf{C}}
\newcommand{\bc}{\mathbf{c}}
\newcommand{\bD}{\mathbf{D}}
\newcommand{\bd}{\mathbf{d}}
\newcommand{\bE}{\mathbf{E}}
\newcommand{\be}{\mathbf{e}}
\newcommand{\bI}{\mathbf{I}}
\newcommand{\bL}{\mathbf{L}}
\newcommand{\bp}{\mathbf{p}}
\newcommand{\bQ}{\mathbf{Q}}
\newcommand{\bq}{\mathbf{q}}
\newcommand{\bR}{\mathbf{R}}
\newcommand{\bt}{\mathbf{t}}
\newcommand{\bV}{\mathbf{V}}
\newcommand{\bU}{\mathbf{U}}
\newcommand{\bv}{\mathbf{v}}
\newcommand{\bX}{\mathbf{X}}
\newcommand{\bx}{\mathbf{x}}
\newcommand{\SOthree}{\mathrm{SO}(3)}
\newcommand{\Stwo}{\mathcal{S}^2}
\newcommand{\ME}{\mathcal{M}_{\bE}}
\newcommand{\corr}{\mathbf{f}}
\newcommand{\bff}{\mathbf{f}}
\newcommand{\eps}{\varepsilon}
\newcommand{\epst}{\eps_{t}}
\newcommand{\bXEtqh}{\opt{\bX}_{\bE,\bt\bq,h}}
\newcommand{\ceqq}{\coloneqq}
\newcommand{\bXopt}{\mathbf{X}^{\star}}
\newcommand{\bXopte}{\bXopt_{\be}}
\newcommand{\bXoptt}{\bXopt_{\bt}}
\newcommand{\bXopttq}{\bXopt_{\bt\bq}}
\newcommand{\btq}{\mathbf{v}_{\bt\bq}}
\newcommand{\PSD}{\mathcal{S}_{+}}
\newcommand{\opt}[1]{#1^{\star}}
\newcommand{\bb}[1]{\mathbb{#1}}
\newcommand{\skeww}[1]{[#1]_{\times}}
\newcommand{\aug}[1]{\bar{#1}}
\newcommand{\twodots}{\mathinner{\ldotp\ldotp}}
\DeclareMathOperator{\rank}{rank}
\DeclareMathOperator{\diag}{diag}
\DeclareMathOperator{\trace}{tr}
\DeclareMathOperator{\Adj}{Adj}
\DeclareMathOperator{\vc}{vec}
\newcommand{\coloredarrow}[2]{%
  \tikz[baseline=-0.5ex]{\draw[#1, -Stealth] (0,0) -- (#2,0);}%
}
\definecolor{pastelred}{RGB}{235,119,119}
\definecolor{pastelorange}{RGB}{255,202,126}
\newcommand{\setword}[2]{%
  \phantomsection
  #1\def\@currentlabel{\unexpanded{#1}}\label{#2}%
}
\newcommand{\crefprob}[1]{Prob. (\ref{#1})}
\newcommand{\method}[1]{{\small\textsf{#1}}}
\newcommand{\name}{\method{C2P}\xspace}
\newcommand{\nameFast}{\method{C2P-fast}\xspace}
\begin{document}
\crefname{appendix}{Supp.}{Supps.}
\Crefname{appendix}{Supplementary}{Supplementaries}
\maketitle
\begin{abstract}
Estimating the relative camera pose from $n \geq 5$ correspondences between two calibrated views is a fundamental task in computer vision. This process typically involves two stages: 1) estimating the essential matrix between the views, and 2) disambiguating among the four candidate relative poses that satisfy the epipolar geometry. In this paper, we demonstrate a novel approach that, for the first time, bypasses the second stage. Specifically, we show that it is possible to directly estimate the correct relative camera pose from correspondences without needing a post-processing step to enforce the cheirality constraint on the correspondences.
Building on recent advances in certifiable non-minimal optimization, we frame the relative pose estimation as a Quadratically Constrained Quadratic Program (QCQP). By applying the appropriate constraints, we ensure the estimation of a camera pose that corresponds to a valid 3D geometry and that is globally optimal when certified. We validate our method through exhaustive synthetic and real-world experiments, confirming the efficacy, efficiency and accuracy of the proposed approach. \iftoggle{cvprfinal}{Code is available at \url{https://github.com/javrtg/C2P}.}{Our code can be found in the supp. material and will be released.}
\end{abstract}
    
\section{Introduction}\label{sec:intro}

\begin{figure}
    \centering
    \includegraphics[width=1.0\linewidth]{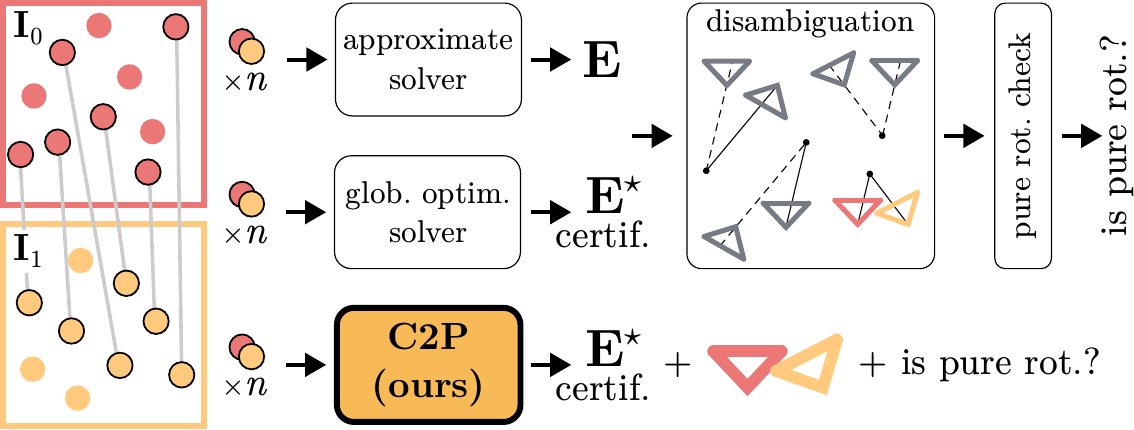}
    \caption{\textbf{Relative pose directly from matches, without posterior disambiguation and pure rotation checks.} Traditionally, estimating the relative pose involves two steps: 1) Estimating the essential matrix $\bE$ using an approximate or globally-optimal solver, and 2) disambiguating the unique geometrically valid pose among four candidate relative poses, with an additional step to determine if the motion is purely rotational. In this paper, we introduce \name, a globally-optimal and certifiable approach that, for the first time, solves the relative pose problem in a single step.}
    \label{fig:teaser}
\end{figure}

Finding the relative pose between two calibrated views is crucial in many computer vision applications. This task is particularly relevant, among others, in Structure from Motion (SfM) \cite{Schonberger2016colmap,moulon2016openmvg}, and Simultaneous Localization And Mapping (SLAM) \cite{mur2015orb,mur2017orb2,campos2021orb3}. In SfM, it serves to geometrically verify the correspondences as well as to provide pairwise constraints for pose averaging schemes \cite{hartley2013rotation,lee2022hara}. In SLAM, besides correspondence verification, it is also used for bootstrapping the odometry of the camera and computing an initial estimate of the 3D map.

The relative pose problem has five observable degrees of freedom: three for the relative rotation between the cameras, and two for the direction of the relative translation. 
The standard approach for its computation \cite{Hartley2004mvg} begins by considering a set of $n$ pixel correspondences between the two images. These correspondences can be established through matching the descriptors of keypoints extracted from the images \cite{lowe2004distinctive,alcantarilla2013fast,detone2018superpoint,Lindenberger2023lightglue}, or more recently by estimating a (semi)dense 2D mapping between the views \cite{sun2021loftr,truong2023pdcnetpami,Edstedt2023dkm}. The pose is then computed by minimizing epipolar errors \cite{Hartley2004mvg}, requiring at least five correspondences. Solvers that handle $n=5$ correspondences are termed \emph{minimal} \cite{nister2004five, stewenius2006five} and those able to handle all the correspondences are called \emph{non-minimal} \cite{Briales_2018_CVPR, zhao2022nonmin}.

This paper focuses on non-minimal solvers. In a practical setup in which input correspondences may contain outliers, these solvers are essential within RANSAC \cite{fischler1981ransac,raguram2012usac} and Graduated Non-Convexity (GNC) \cite{yang2020gnc}. In RANSAC, non-minimal solvers are used to improve the accuracy of the \emph{so-far-best} and final models (initially computed with minimal solvers) thanks to noise cancellation of the inliers \cite{raguram2012usac}. In GNC, globally-optimal non-minimal solvers serve as fundamental building blocks to robustly solve a weighted instance of the problem in an iterative fashion.

Since the seminal paper by \citet{longuet1981computer} in 1981, computing this non-minimal estimate of the relative pose, involves two key steps: 
1) estimating the \emph{essential matrix}---which models the epipolar geometry of the problem---via an approximate (minimal or local optimization) method, or a certifiably globally-optimal method, and 2) disambiguating the true relative pose from those satisfying the same epipolar geometry. This ambiguity arises from ignoring the \emph{cheirality constraints} which enforce a valid 3D geometry (mainly that the 3D points observed in the image must be in front of the camera), 
resulting in a additional overhead that scales with the number of points.

In this paper, we demonstrate that this two-step approach, gold standard for more than 40 years, is not a must, and present a method that, for the first time directly estimates the relative pose without requiring posterior disambiguation.
For this purpose, we leverage convex optimization theory and demonstrate that we can obtain a certifiably globally-optimal solution that is geometrically valid. Additionally, our method can determine if the motion between the images is purely rotational. This information is relevant, since the translation vector is undefined under pure rotation and its estimation should not be trusted. Current methods require a posterior verification for the same purpose. We present a visual overview of both current approaches and our method in Figure \ref{fig:teaser}. 
Our main contributions are: 1) We provide a non-minimal certifiably globally-optimal method that, for the first time, solves the relative pose problem without the need of disambiguation. We derive the sufficient and necessary conditions to recover the optimal solution, 2) for this purpose, we also derive a novel characterization of the normalized essential manifold that is needed to enforce a geometrically valid solution, and 3) besides our theoretical contributions, we show experimentally that our method scales better than the alternatives in the literature.

\section{Related Work}\label{sec:related_work}

\PAR{Non-minimal epipolar geometry.}
Initial methods for estimating the essential matrix (or \emph{fundamental} matrix in the uncalibrated case) \cite{longuet1981computer,hartley1997eight} rely on linear relaxations, which do not account for the nonlinearities arising from the constraints of the problem. As a result, these methods provide an approximate solution that needs to be projected onto the appropriate space.
While several methods \cite{Tron_2014_CVPR,tron2017essential,garciasalguero2021certifiable,helmke2007essential,garciasalguero2021fast} refine an approximate estimate through local optimization on the manifold of the essential matrix, they, despite being certifiable \cite{garciasalguero2021certifiable,garciasalguero2021fast}, cannot guarantee the global optimality of the solution.
To achieve global optimality, various methods \cite{hartley2009globalbnb, kneip2013directoptrotbnb, yang2014optimalbnb} employ Branch and Bound (BnB) techniques to explore the feasible parameter space and eliminate regions that are guaranteed not to contain the optimal solution. However, these methods can exhibit exponential time complexity in the worst case.
More recently, the use of relaxed Quadratically Constrained Quadratic Programs (QCQP) onto Semidefinite Programs (SDP) via the Shor's relaxation \cite{boyd2004convex,bao2011semidefinite}, has enabled methods to provide, and certify, globally-optimal solutions \cite{Briales_2018_CVPR,zhao2022nonmin,garciasalguero2022tighter,Karimian2023essential,Zhao2020generalizedessmat}. \citet{Briales_2018_CVPR} adopt an eigenvalue formulation of the problem \cite{kneip2013directoptrotbnb} and show that a tight relaxation of this non-convex formulation can be achieved through redundant constraints. \citet{zhao2022nonmin} achieves a significant increase in performance by optimizing with respect to a more efficient formulation, resulting in a reduced set of constraints and parameters. In addition, \citet{garciasalguero2022tighter} derive redundant constraints to improve the general tightness of the work by \citet{zhao2022nonmin}. Finally, \citet{Karimian2023essential} show that, under moderate levels of noise, a faster solution can be achieved using the Riemannian staircase algorithm \cite{burer2003lowrank,boumal2016staircase}.
In this paper, we address a common limitation of all previous approaches. We demonstrate that it is possible to avoid the posterior disambiguation of the four relative poses that satisfy their solutions. We achieve this also by relaxing a QCQP formulation of the problem. We design and introduce the constraints that take into account the 3D geometric meaning of the estimates, and provide a fast globally optimal solution for the geometrically correct relative pose.

\PAR{Related certifiably globally optimal methods.}
Similar relaxations to those used in previous methods are also applied in various closely related areas of computer vision. For example, applications of Shor's relaxation are found in tasks such as of solving Wahba's problem \cite{Yang2019quasarwahba}, pointcloud and 3D registration \cite{yang2021teaserplusplus,Briales2017registration} and multiview triangulation \cite{Harenstam2023triangulation}. Similarly, the Riemannian staircase algorithm, coupled with local manifold optimization, is used in tasks exploiting the low-rank nature of their solutions, including rotation averaging \cite{dellaert2020shonan} and pose synchronization \cite{rosen2019sesync,briales2017cartan}.

\section{Non-minimal solver for the relative pose}\label{sec:method}
In this work, we assume a central camera model, making our method suitable for, \eg, pinhole, fisheye and omnidirectional cameras. We consider that each correspondence, $i$, is parameterized by unit bearing vectors $\bff_{0,i}, \bff_{1,i}\in\Stwo$. 
Given a set of $n>5$ correspondences $\{\bff_{0,i}, \bff_{1,i}\}_{i=1}^n$, our goal is to directly estimate the relative pose, which we parameterize with a rotation matrix $\bR\in\SOthree$ and a unit-norm translation vector $\bt\in\Stwo$. These parameters define the manifold of normalized essential matrices \cite{helmke2007essential,zhao2022nonmin}:
\begin{equation}
    \ME \ceqq \{\bE \mid \bE = \skeww{\bt}\bR,~\bR\in\SOthree,~\bt\in\Stwo\}
    \text{\footnotemark}~.\label{eq:norm_ess_manif}
\end{equation}
\footnotetext{Where $\skeww{\bt}$ denotes the skew-symmetric matrix corresponding to $\bt$.}%
However, \cref{eq:norm_ess_manif} does not account for the geometric peculiarities and symmetry of the epipolar constraints \cite{Tron_2014_CVPR,tron2017essential}, which motivates us to investigate a necessary and sufficient characterization for directly estimating the relative pose. In \cref{sec:constraints}, we derive such characterization with a focus on making it amenable for a QCQP formulation (\cref{sec:qcqp}). We demonstrate that the relaxed QCQP yields a tight and certifiably globally optimal solution for the relative pose (\cref{sec:sdp}).

\subsection{Necessary and sufficient constraints}\label{sec:constraints}

\begin{figure}
    \newcommand{\cmark}{\ding{51}}%
    \newcommand{\xmark}{\ding{55}}%
    \centering
    \begin{overpic}[width=1.0\linewidth]{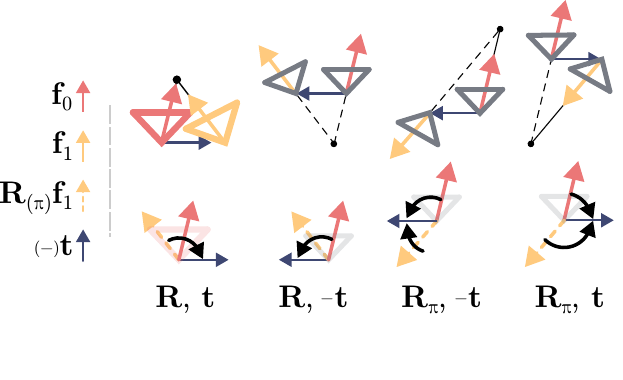}
        \put (22,6) {\cref{eq:rot_dis2} \textcolor{green}{\cmark}}
        \put (22,0) {\cref{eq:dis_t2} \textcolor{green}{\cmark}}
        \put (42,6) {\cref{eq:rot_dis2} \textcolor{green}{\cmark}}
        \put (42,0) {\cref{eq:dis_t2} \textcolor{red}{\xmark}}
        \put (62,6) {\cref{eq:rot_dis2} \textcolor{red}{\xmark}}
        \put (62,0) {\cref{eq:dis_t2} \textcolor{red}{\xmark}}
        \put (83,6) {\cref{eq:rot_dis2} \textcolor{red}{\xmark}}
        \put (83,0) {\cref{eq:dis_t2} \textcolor{green}{\cmark}}
    \end{overpic}
    \caption{\textbf{Necessary geometric conditions.} When removing the rotational flow  between the bearing vectors \cite{kneip2012finding}, \ie considering $\bff_0$ and $\bR\bff_1$, they must exhibit the same (counter-)clockwise turn w.r.t. the translation (\cref{eq:rot_dis2}). Otherwise, the rotation must be a reflected version, $\bR_{\pi}$, of the true rotation, $\bR$. Considering the correct rotation, $\bff_0$ must have greater projection onto the translation than $\bR\bff_1$ (\cref{eq:dis_t2}). Otherwise, the bearings would not meet along the direction of their beams (\protect\coloredarrow{pastelred}{3mm}, \protect\coloredarrow{pastelorange}{3mm}), implying that the translation is flipped ($-\bt$) w.r.t. the correct one, $\bt$. Therefore, besides avoiding triangulation, these constraints completely disambiguate the relative pose and are generally applicable to central camera models since they do not rely on traditional positive-depth constraints.}
    \label{fig:fourposes}
\end{figure}

It is well known in the literature \cite[Sec. 9.6]{Hartley2004mvg} that four relative poses satisfy the same epipolar geometry. However, only one of these poses is geometrically valid, in the sense that it leads to an estimation of the 3D points being in front of the cameras\footnote{In practice, issues such as noise and small-scale translation relative to the observed scene can cause some 3D points to appear behind the cameras, despite a correct pose estimate. Consequently, cheirality is often verified for all points and then aggregated to robustly select the correct pose \cite{opencv_library,Schonberger2016colmap}.}. A common approach to circumvent this involves triangulating the points and checking for the positive sign of their depths \cite{Hartley2004mvg, nister2004five}, \emph{a posteriori}. Preliminary experiments showed that enforcing this positive-depth constraint \emph{during the optimization} is challenging and costly due to the complexities associated with the use of rotation matrices \cite{Briales_2018_CVPR,burer2023strengthened}, even after imposing convex hull constraints of $\SOthree$ \cite{sanyal2011orbitopes,saunderson2015semidefinite}. Consequently, we propose using simpler and more efficient constraints to disambiguate the pose during the optimization and visualize them in Figure \ref{fig:fourposes}.

\PAR{Rotation disambiguation.}
As shown in \cite{kneip2012finding}, regardless of the sign of the translation $\bt$, the two normals of the epipolar plane of a correspondence ($\bff_0, \bff_1$): $\bt\times\bff_0$ and $\bt\times\bR\bff_1$ satisfy:
\begin{equation}
    (\bt\times\bff_0)\cdot(\bt\times\bR\bff_1) > 0~,\label{eq:rot_dis1}
\end{equation}
Intuitively, considering $\bff_0$ and $\bR\bff_1$ in the same coordinate system, the smallest in-plane rotations required to align $\bff_0$ with $\bt$ and $\bR\bff_1$ with $\bt$ must be both clockwise or counter-clockwise. This condition is not met when the 3D point does not lie along one of the bearing vectors' \emph{beams}, which is the case for the (incorrect) reflected rotation matrix. 
However, besides involving $\bR$, \cref{eq:rot_dis1} is cubic on the unkowns, requiring a re-formulation with additional parameters to adapt it for a QCQP. A more straightforward solution arises upon realizing that $\bt\times\bR\bff_1=\skeww{\bt}\bR\bff_1=\bE\bff_1$, leading to:
\begin{align}
    (\bE\bff_1)\cdot(\skeww{\bt}\bff_0) = \bff_1^\top\bE^\top\skeww{\bt}\bff_0 > 0~,\label{eq:rot_dis2}
\end{align}
which now depends quadratically on the unknowns and is thus suitable for a QCQP formulation.

\PAR{Translation disambiguation.}
Building on the previous intuition that both $\bff_0$ and $\bR\bff_1$ require a (counter-)clockwise in-plane rotation to be aligned with $\bt$, it follows that:
\begin{equation}
    \bff_0^\top\bt - (\bR\bff_1)^\top\bt \geq 0~.\label{eq:dis_t1}
\end{equation}
If this condition is not met, the bearing vectors would not intersect along their beams, implying that the translation vector is the (incorrect) negative of $\bt$. The impact of this constraint on restricting the space of possible unit translations is shown in \cref{fig:disambiguation_t}. However, \cref{eq:dis_t1} again involves globally optimizing $\bR$. A more efficient approach is to optimize the rotated translation vector $\bq\ceqq\bR^\top\bt$, $\bq\in\Stwo$, resulting in:
\begin{equation}
    \bff_0^\top\bt - \bff_1^\top\bq \geq 0~.\label{eq:dis_t2}
\end{equation}
which is linear in the unknowns and can be incorporated in a QCQP with an homogenization variable \cite{giamou2023semidefinite,Briales_2018_CVPR}. However, this introduces the challenge of ensuring that $\bq=\bR^\top\bt$ still holds without optimizing $\bR$. We address this next by deriving a novel definition of the normalized essential manifold.

\begin{figure*}
    \centering
    \includegraphics[width=.85\textwidth]{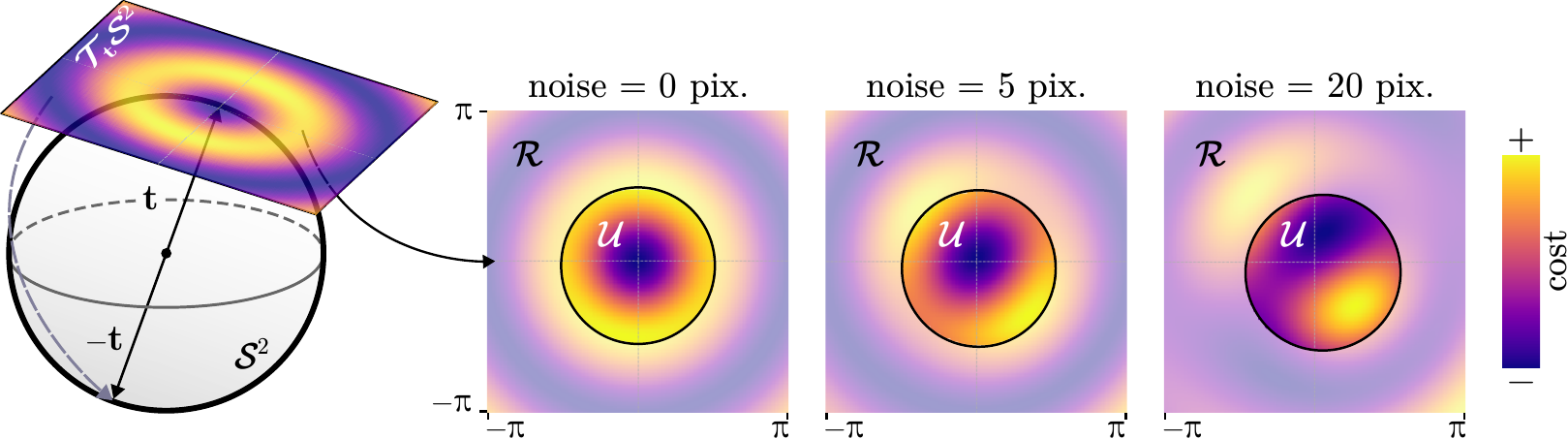}
    \caption{\textbf{Automatic disambiguation of the relative pose.} Our method restricts the set of possible rotations and translations (unit vectors, due to scale ambiguity) for solving the relative pose, by incorporating cheirality constraints in the optimization. We visualize this for the translation with \emph{cost} maps of squared epipolar errors in the tangent space at the ground-truth translation, $\mathcal{T}_{\bt}\Stwo$, and for different levels of noise. Elements in $\mathcal{T}_{\bt}\Stwo$ are mapped to the sphere along geodesics using the exponential map, which is a bijective mapping for $\lVert\bv\rVert\leq\pi$ with $\bv\in\mathcal{T}_{\bt}\Stwo$ \cite{boumal2023intromanifolds}. This enables us to show, on the right, the space not satisfying the constraint of \cref{eq:dis_t2} with lower opacity, named $\mathcal{R}$. As can be seen, the global minimum corresponding to $\bt$, always lies within the unrestricted space, named $\mathcal{U}$, while it excludes the global minima corresponding to $-\bt$. Therefore the solver is able to automatically select the translation with the correct sign as the solution.}
    \label{fig:disambiguation_t}
\end{figure*}

\PAR{Manifold constraints.}
To ensure that $\bq=\bR^\top\bt$ holds during the optimization, we need to mutually constrain $\bt$ and $\bq$. Previous definitions of the normalized essential manifold, such as those involving the left, right and quintessential matrix sets \cite{zhao2022nonmin,garciasalguero2022tighter,Karimian2023essential}, do not include this kind of constraints. The most suitable constraints for our purposes are the ones involving the adjugate matrix of $\bE$: $\Adj(\bE)=\bq\bt^\top$, used in \cite{garciasalguero2022tighter} as redundant constraints. We show in Th. \ref{th:essmat} that these, along with norm constraints, suffice to define $\ME$, and to perform an efficient joint optimization of $\bE, \bt$ and $\bq$.

\begin{theorem}\label{th:essmat}
    A real $3\times3$ matrix, $\bE$, is an element of $\mathcal{M}_{\bE}$ if and only if it satisfies:
    \begin{equation}
        \text{(i)} \quad \trace(\bE\bE^\top)=2 
        \quad \mathrm{and} \quad
        \text{(ii)} \quad \Adj(\bE) = \bq \bt^\top~,
    \end{equation}
    for two vectors $\bt,\bq\in\Stwo$ 
    and where $\Adj(\bE)$ represents the \href{https://en.wikipedia.org/wiki/Adjugate_matrix}{adjugate matrix} \cite{horn2012matrix} of $\bE$.
\end{theorem}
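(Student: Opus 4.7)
I would prove both implications separately, using only the multiplicative property of the adjugate ($\Adj(AB)=\Adj(B)\Adj(A)$), the identity $\Adj(M)^\top = \Adj(M^\top)$, and two direct computations: $\Adj(\bR)=\bR^\top$ for $\bR\in\SOthree$ and $\Adj(\skeww{\bt})=\bt\bt^\top$ for unit $\bt$ (which can be verified by noting $\skeww{\bt}\Adj(\skeww{\bt})=0$ forces $\Adj(\skeww{\bt})\propto\bt\bt^\top$, and checking the coefficient via $\trace(\Adj(\skeww{\bt}))=\|\bt\|^2=1$).

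For the forward direction, suppose $\bE=\skeww{\bt}\bR$ with $\bt\in\Stwo$, $\bR\in\SOthree$. Condition (i) is immediate: $\bE\bE^\top=-\skeww{\bt}^2 = \bI-\bt\bt^\top$, whose trace is $2$. For condition (ii), applying the identities above yields $\Adj(\bE)=\Adj(\bR)\Adj(\skeww{\bt}) = \bR^\top\bt\bt^\top = \bq\bt^\top$ with $\bq\ceqq\bR^\top\bt\in\Stwo$. So the ``only if'' part is just routine verification.

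For the converse, I plan to reduce to the classical singular-value characterization of normalized essential matrices, namely that $\bE\in\ME$ iff its singular values are $(1,1,0)$. Since $\bq,\bt$ are unit vectors, $\bq\bt^\top$ has rank $1$, hence $\Adj(\bE)$ has rank $1$ and $\bE$ has rank $2$. Let $\sigma_1,\sigma_2>0$ be the nonzero singular values of $\bE$, so the eigenvalues of $\bM\ceqq\bE\bE^\top$ are $\sigma_1^2,\sigma_2^2,0$. Condition (i) gives $\sigma_1^2+\sigma_2^2=2$. Applying the adjugate identities, I obtain $\Adj(\bM)=\Adj(\bE^\top)\Adj(\bE) = \bt\bq^\top\bq\bt^\top = \bt\bt^\top$, so $\trace(\Adj(\bM))=1$. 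Since for a $3\times 3$ matrix the trace of the adjugate is the sum of the $2\times 2$ principal minors, equivalently the second elementary symmetric function of the eigenvalues, we get $\sigma_1^2\sigma_2^2=1$. Solving $\sigma_1^2+\sigma_2^2=2$, $\sigma_1^2\sigma_2^2=1$ forces $\sigma_1=\sigma_2=1$, so $\bE$ has singular values $(1,1,0)$ and thus $\bE\in\ME$.

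\textbf{Anticipated obstacle.} The arithmetic is essentially trivial; the only delicate point is the backward step that converts the rank-$1$ adjugate constraint into $\sigma_1\sigma_2=1$. Doing this via the determinant of $\bE$ alone is insufficient (it is zero), which is why I route through $\Adj(\bM)$ and use the elementary-symmetric-function identity for $\trace(\Adj(\cdot))$. As a consistency sanity check (useful for the reader but not strictly needed), one can verify that the specific $\bt$ appearing in (ii) automatically coincides with a valid translation of a decomposition $\bE=\skeww{\bt}\bR'$: the identity $\bE\Adj(\bE)=0$ gives $\bE^\top\bt=0$, placing $\bt$ in the $1$-dimensional left null space of $\bE$, and one can then pick the rotation of the standard Hartley--Zisserman decomposition corresponding to that sign of $\bt$.
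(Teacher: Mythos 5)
Your proof is correct, and its overall skeleton matches the paper's: the \emph{only if} direction is the same routine computation $\Adj(\bE)=\Adj(\bR)\Adj(\skeww{\bt})=\bR^\top\bt\bt^\top=\bq\bt^\top$, and the \emph{if} direction in both cases reduces to showing the singular values are $(1,1,0)$ by combining $\sigma_1^2+\sigma_2^2=2$ (from condition (i)) with $\sigma_1\sigma_2=1$ (from condition (ii)). The one step you do differently is the extraction of $\sigma_1\sigma_2=1$. The paper takes the SVD $\bE=\bU\bD\bV^\top$, pushes $\Adj$ through the factors to get $\Adj(\bE)=\pm\sigma_1\sigma_2\,\bv_2\bu_2^\top$, and matches Frobenius norms against the unit outer product $\bq\bt^\top$ (which requires tracking the $\pm\det$ signs of $\Adj(\bU)$ and $\Adj(\bV^\top)$). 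You instead compute $\Adj(\bE\bE^\top)=\Adj(\bE^\top)\Adj(\bE)=\bt\bq^\top\bq\bt^\top=\bt\bt^\top$ and invoke $\trace(\Adj(\cdot))=e_2(\text{eigenvalues})=\sigma_1^2\sigma_2^2$; this avoids the explicit SVD factorization and the sign bookkeeping at the cost of one extra standard identity, and is arguably the cleaner route. One small polish point: your parenthetical derivation of $\Adj(\skeww{\bt})=\bt\bt^\top$ cites only $\skeww{\bt}\Adj(\skeww{\bt})=0$, which by itself pins down the column space but not the row space; you also need $\Adj(\skeww{\bt})\skeww{\bt}=0$ (or the symmetry of the adjugate of a $3\times3$ antisymmetric matrix) before concluding proportionality to $\bt\bt^\top$. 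This does not affect the validity of either direction of the main argument.
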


\begin{proof}
    For the \emph{if} direction, assume $\Adj(\bE) = \bq \bt^\top$ and $\trace(\bE\bE^\top)=2$. The outer product $\bq\bt^\top$ is a rank-1 matrix, implying that $\rank(\bE)=2$. Thus, the singular value decomposition (SVD) of $\bE$ is
    $\bE = \bU \bD \bV^\top$, with $\bD \ceqq \diag(\sigma_0,~\sigma_1,~0)$, $\sigma_0, \sigma_1\in\bb{R}^{+}$ and $\bU, \bV \in \mathrm{O}(3)$.
    The adjugate of $\bE$ can then be expressed as\footnote{We use the adjugate matrix property $\Adj(\bA\bB)=\Adj(\bB)\Adj(\bA)$, for any $\bA,\bB\in\bb{R}^{n\times n}$. We also use that $\Adj(\bQ)=\det(\bQ)\bQ^\top=(\pm1)\bQ^\top$ for any orthogonal matrix $\bQ\in\mathrm{O}(n)$.}:
    \begin{align}
        \Adj(\bE) &= \Adj(\bU \bD \bV^\top)~, \\
        &= \Adj(\bV^\top) \Adj(\bD) \Adj(\bU)~, \\
        &= \pm \bV \diag(0,~0,~\sigma_0\sigma_1) \bU^\top~, \\
        &= \pm \sigma_0\sigma_1 \mathbf{v}_2 \mathbf{u}_2^\top~,
    \end{align}
    where $\mathbf{v}_2$ and $\mathbf{u}_2$ are the third columns of $\bV$ and $\bU$, respectively. Since the vectors involved in both outer products, $\mathbf{v}_2 \mathbf{u}_2^\top$ and $\bq \bt^\top$, are unit vectors, this implies that $\sigma_0\sigma_1=1$. Furthermore, since $\trace({\bE\bE^\top})=2$, it follows\footnote{
    Since $\bE\bE^\top=\bU\bD^2\bU^\top$, it follows that $\trace(\bE\bE^\top)=\trace(\bU\bD^2\bU^\top)=\trace(\bD^2\bU^\top\bU)=\trace(\bD^2)=\sigma_0^2+\sigma_1^2$.} that $\sigma^2_0+\sigma^2_1=2$. These two equations lead to the biquadratic equations:
    \begin{equation}
        \sigma_i^4-2\sigma_i^2+1 
        = (\sigma_i-1)^2 (\sigma_i+1)^2 = 0, 
        \quad i\in\{0,1\}~, 
    \end{equation}
    which have $\sigma_i=1$ as positive roots, meaning that $\bE$ has two non-zero singular values, both equal to $1$. Hence, $\bE$ is an essential matrix that belongs to $\mathcal{M}_{\bE}$.

    For the \emph{only if} direction, assume $\mathbf{E}$ is an essential matrix in $\ME$. Thus, condition (i) is satisfied, and there exist two unit vectors $\bq,\bt\in\Stwo$ and a rotation matrix $\bR\in\SOthree$, satisfying that $\bE = \skeww{\bt}\bR = \bR\skeww{\bq}$ and $\bq = \bR^\top\bt$. We can show then that
    \begin{align}
        \Adj(\bE) &= \Adj(\skeww{\bt}\bR)~, \\
        &= \Adj(\bR)\Adj(\skeww{\bt})~, \\
        &= \bR^\top \bt\bt^\top 
        = \bq\bt^\top~.
    \end{align}
    Thus, condition (ii) is also satisfied.
\end{proof}

\settoggle{foobool}{true}
\iftoggle{foobool}{
\subsection{Recovery of the rotation}\label{sec:rot_recov}
Assuming tight solutions for $\bE\in\ME$ and $\bt,~\bq\in\Stwo$, we can directly recover the rotation $\bR\in\SOthree$ without disambiguation. A (normalized) essential matrix $\bE=\skeww{\bt}\bR$, depends linearly on $\bR$ and since $\rank(\bE)=2$, this provides six independent equations for solving $\bR$ (its nine elements). Thus, three additional independent equations are needed. Notably, $\bt$ lies in the nullspace of $\skeww{\bt}$, allowing us to find the remaining equations in the definition $\bq=\bR^\top\bt$. Hence, $\bR$ can be determined as the solution to this linear system. 
Since our method is empirically tight, our estimates $\bE, \bt$ and $\bq$ belong to their respective spaces, implying that the resulting $\bR$ belongs to $\SOthree$. 
While not theoretically necessary, for better numerical accuracy, we can: 1) project the resulting $\bR$ onto $\SOthree$ by classical means \cite{higham1989matrix}, and 2) consider the linearly dependent equations stemming from the equivalent definitions: $\bE=\skeww{\bt}\bR=\bR\skeww{\bq}$, $\bq=\bR^\top\bt$ and $\bt=\bR\bq$. The corresponding normal equations have as their LHS and RHS terms $2\bI_{9\times9}$ and $2\bt\bq^\top - \skeww{\bt}\bE - \bE\skeww{\bq}$, respectively (with the RHS expressed in vectorized form). Thus, $\bR$ can be solved in closed-form as
\begin{equation}
    \bR = \bt\bq^\top - \frac{1}{2} (\skeww{\bt}\bE + \bE\skeww{\bq})~.
\end{equation}
}
{\PAR{Recovery of the rotation.} Assuming that we have tight solutions for $\bE\in\ME$ and $\bt,~\bq\in\Stwo$, we can directly recover the rotation matrix $\bR\in\SOthree$ without requiring disambiguation. Recall that a (normalized) essential matrix satisfies $\bE=\skeww{\bt}\bR$, which depends linearly on $\bR$. Given that $\rank(\bE)=2$, this definition provides six linearly independent constraints for solving $\bR$ (its nine parameters). Thus, three additional independent equations are needed. Notably, $\bt$ lies in the nullspace of $\skeww{\bt}$, allowing us to find the remaining linear equations from the definition $\bq=\bR^\top\bt$. Consequently, the elements of $\bR$ can be computed as the solution of this linear system of equations. Since our solution is empirically tight, the estimates $\bE, \bt$ and $\bq$ belong to their respective spaces, implying that the solution $\bR$ belongs to $\SOthree$. While not theoretically necessary, for better numerical accuracy, we can project the resulting matrix $\bR$ to $\SOthree$ by classical means \cite{higham1989matrix}. The previous procedure holds even for pure rotational motions. In this situation, any pair of vectors $\bt$, $\bq$ related by $\bq = \bR^\top\bt$, leads to the same global minimum of the normalized epipolar errors (\cref{sec:qcqp}), but even in this case, the definitions $\bE=\skeww{\bt}\bR$ and $\bq = \bR^\top\bt$ still hold thanks to the constraints of \cref{th:essmat}. Thus, $\bR$ can be correctly recovered.}

\subsection{QCQP}\label{sec:qcqp}
The quadratic nature of our constraints motivates us to formulate the relative pose problem as a Quadratically Constrained Quadratic Program (QCQP). Since we parameterize the problem using the essential matrix $\bE$ (along with $\bt, \bq$), we draw inspiration from the efficient optimization strategy in \cite{zhao2022nonmin}. We optimize the same cost function. However, we differentiate from \cite{zhao2022nonmin} in that our approach automatically disambiguates the pose during the optimization, thanks to efficiently constraining the parameter space.

\PAR{Quadratic optimization}
In a noise-free scenario, an essential matrix $\bE$, satisfies the epipolar constraints: $\bff_{0,i}^\top\bE\bff_{1,i}=0$, for any correspondence $i$ \cite{Hartley2004mvg}. In practice, this does not happen and we instead optimize $\bE$ by minimizing the sum of squared normalized epipolar errors \cite{kneip2012finding,lee2020geometric, Muhle2022pnec}: 
\begin{equation}
    \min\sum_{i=1}^n (\bff_{0,i}^\top\bE\bff_{1,i})^2~.\label{eq:cost1}
\end{equation}
Since $\vc(\bff_{0,i}^\top\bE\bff_{1,i}) = (\bff_{0,i}\otimes\bff_{1,i})^\top\vc(\bE^\top)$ \cite{van2000ubiquitous}, this implies that \cref{eq:cost1} can be reformulated as:
\begin{alignat}{3}
    &\min\be^\top\bC\be~, \\
    &\be\ceqq\vc(\bE^\top),~
    \bC \ceqq \sum_i (\bff_{0,i}\otimes\bff_{1,i})(\bff_{0,i}\otimes\bff_{1,i})^\top~,\label{eq:matC}
\end{alignat}
where $\otimes$ is the Kronecker product \cite{van2000ubiquitous} and $\be\in\bb{R}^{9}$ represents the vector resulting from stacking the rows of $\bE$.

\PAR{Problem \setword{QCQP}{word:qcqp}.}
We formulate the relative pose problem as the following QCQP:
\begin{align}
    \min_{\be,\bt,\bq}\quad & \be^\top\bC\be~, \label{eq:qcqp_minterm}\\
    \textrm{s.t.} \quad 
    & \trace(\bE\bE^\top)=2,\quad\Adj(\bE)=\bq\bt^\top,~\label{eq:qcqp_essmat}\\
    & \bt^\top\bt=1,\quad\bq^\top\bq=1~, \label{eq:qcqp_essmat2} \\
    & \aug{\bff}_1\bE^\top\skeww{\bt}\aug{\bff}_0 - s_r^2 = 0~, \label{eq:qcqp_rotdis}\\
    & h\aug{\bff}_0^\top\bt - h\aug{\bff}_1^\top\bq - s_t^2 = 0~, \label{eq:qcqp_tdis} \\
    & h^2=1~\label{eq:qcqp_h}.
\end{align}
Both the minimization term and the constraints are quadratic. 
\cref{eq:qcqp_essmat,eq:qcqp_essmat2} correspond to the constraints presented in \cref{th:essmat}, and \cref{eq:qcqp_rotdis,eq:qcqp_tdis,eq:qcqp_h} to those presented at the beginning of \cref{sec:constraints}. Thus, we have $d=18$ parameters and $m=15$ constraints. As commented in \cref{sec:constraints}, these constraints are necessary and sufficient to disambiguate the relative pose during optimization. 

\PAR{Introduction of inequalities.} To transform the proposed inequalities of \cref{eq:rot_dis2,eq:dis_t2} into compact quadratic equalities that facilitate the use of off-the-shelf SDP solvers (\cref{sec:sdp}), we use two techniques: 1) we multiply \cref{eq:qcqp_tdis} with an \emph{homogenization variable} $h$, restricted to the values $\{-1,1\}$. This introduces a spurious (negative) solution if $h=-1$, but this is trivially checked and corrected \cite{Briales_2018_CVPR,giamou2023semidefinite}. 2) we introduce \emph{slack variables} \cite{fujisawa2002sdpa62} $s_r^2,s_t^2\in\bb{R}$. Since $s_r^2$ and $s_t^2$ are non-negative, this ensures the fulfillment of the inequalities. Additionally, $s_t^2$ offers an interesting advantage, as we show in \cref{sec:experiments}, since it enables the detection of pure rotational motions. Finally, note that we have adopted a modified notation for the bearings: $\aug{\bff}_i$, to denote that instead of selecting a random correspondence, we average at runtime the scalar coefficients from \cref{eq:qcqp_rotdis,eq:qcqp_tdis}, stemming from all the correspondences. This approach is motivated to average potential inlier noise in the correspondences. We detail this averaging in \Cref{app:supp_avg}.

\section{SDP relaxation and optimization}\label{sec:sdp}

Generally, optimizations of QCQPs like the one in \crefprob{word:qcqp} are (nonconvex) NP-hard. However, semidefinite programming (SDP) relaxations, have shown to be a powerful tool to tackle global optimization in computer vision \cite{cifuentes2022local}.  We draw inspiration from this, and relax the QCQP to an SDP. First, we write \crefprob{word:qcqp} in general form as:
\begin{align}
    \min_{\bx\in\bb{R}^d}\quad & \bx^\top\bC_0\bx~, \label{eq:qcqp_stdmin}\\
    \textrm{s.t.}\quad & \bx^\top \bA_i\bx = b_i, \quad i\in\{1, \twodots, m\} \label{eq:qcqp_stdconstraints}~,
\end{align}
where $\bx = [\be^\top, \bt^\top, \bq^\top, h, s_r, s_t]^\top$, $\bC_0$ is the adaptation of $\bC$ in \cref{eq:matC} to the formulation in \cref{eq:qcqp_stdmin}, \ie $\bC_0\in\PSD^{d}$\footnote{We denote as $\PSD^{d}$ the set of $d\times d$ positive semidefinite (PSD) matrices.} is a block-diagonal matrix whose only nonzero block is $\bC$, and $~\bA_i\in\mathcal{S}^{d}$ are symmetric matrices formed by the reformulation of \cref{eq:qcqp_essmat,eq:qcqp_essmat2,eq:qcqp_rotdis,eq:qcqp_tdis,eq:qcqp_h} to the form of \cref{eq:qcqp_stdconstraints}\footnote{In practice, off-the-shelf solvers like \textsf{SDPA} \cite{yamashita2010sdpa7} allow the use of a sparse representation of the constraints and only store the nonzero values of our constraints and minimization term.}.

To relax the QCQP, we use the cyclic property of the trace to realize that $\trace(\bx^\top\bC_0\bx)=\trace(\bC_0\bx\bx^\top)=\trace(\bC_0\bX)$, where $\bX\ceqq\bx\bx^\top\in\PSD^{d}$ constitutes a lifting of the parameters from $\bb{R}^d$ to $\PSD^{d}$. Doing similarly for \cref{eq:qcqp_stdconstraints}, allows us to define the following SDP:
\PAR{Problem \setword{SDP}{word:sdp}}
\begin{align}
    \min_{\bX \in \mathcal{S}^{d}}\quad &
    \trace(\bC_0\bX)~,\label{eq:sdp_min} \\
    \textrm{s.t.}\quad & 
    \trace(\bA_i\bX) = b_i, \quad i\in\{1, \twodots, m\} \label{eq:sdp_constraints}~,\\
    \quad & \bX\succeq0~.
\end{align}
SDPs are convex, being solvable globally-optimally in practice \cite{vandenberghe1996semidefinite}. The relaxation comes from not imposing any constraint on $\bX$ that ensures that the feasible set of \crefprob{word:sdp} matches that of \crefprob{word:qcqp}. 
If the global minimum of Probs. (\ref{word:sdp}) and (\ref{word:qcqp}) match, then we say that the relaxation is \emph{tight}. 
To obtain the solution estimate $\opt{\bx}$ we need to recover it from the globally optimal $\bXopt$. Interestingly, unlike other SDP relaxations in the literature \cite{Yang2019quasarwahba,Harenstam2023triangulation}, in the relative pose problem we cannot assume that tightness is achieved when $\rank(\bXopt)=1$, \ie, that $\opt{\bx}$ can be recovered by the factorization $\bXopt=\opt{\bx}(\opt{\bx})^\top$. Several works have noted this empirically \cite{Briales_2018_CVPR,zhao2022nonmin,Karimian2023essential}. For instance, the formulation of \cite{zhao2022nonmin} can be tight when $\rank(\bXopt)=2$. We prove this in \Cref{sec:supp_tightness}. Inspired by \cite{garciasalguero2022tighter}, to increase the overall tightness of our solutions,
we also include the following redundant quadratic constraints in our method:
\begin{equation}
    \bE\bE^\top = \skeww{\bt}\skeww{\bt}^\top~,\quad
    \bE^\top\bE = \skeww{\bq}\skeww{\bq}^\top~.\label{eq:redundant}
\end{equation}
We will refer to our method as \name, and as \nameFast when the redundant constraints of \Cref{eq:redundant} are not used.

\subsection{Relative pose recovery}\label{sec:recovery}
Given the optimal SDP solution, $\bXopt$, our goal is to recover the optimal relative pose. We define $\bXEtqh\ceqq\bXopt_{[1:16,1:16]}$ as the top-left $16\times16$ submatrix of $\bXopt$. Empirically, $\bXEtqh$ exhibits three nonzero singular values of varying magnitudes, while others are close to zero ($\sim10^{-6}$). This observation aligns with our parameterization, existing three linearly independent vectors that equally minimize \cref{eq:qcqp_minterm} (shown in \cref{th:qcqp}). However, only the vector corresponding to the largest singular value meets the proposed cheirality constraints (\cref{sec:constraints}). The reasoning for this is as follows: $\bXopt$ being a PSD matrix, can be decomposed (ignoring the singular values close to 0) as $\bXEtqh=\sum_i^3\sigma_i\bv_i\bv_i^\top$ when tight, and with $(\sigma_i,\bv_i)$ being a singular value-vector pair. This implies that 1) Each $\bv_i$ minimizes \cref{eq:qcqp_minterm} (see \cref{th:qcqp}), and 2) to satisfy the cheirality constraints, the singular vector $\bv_i$ corresponding to the correct solution, must have the biggest singular value in order to satisfy \cref{eq:qcqp_rotdis,eq:qcqp_tdis}. If we are strict, we should take into account the norms of $\bE$, $\bt$ and $\bq$ to see which singular pair contributes more positively in \cref{eq:qcqp_rotdis,eq:qcqp_tdis}. However, in practice, the solution is consistently found in the dominant singular vector (with biggest $\sigma_i$). We show some experiments corroborating this in Figure \ref{fig:svals}. Given the dominant vector $\bv_{0}$, we extract the parameters from its decomposition $\bv_{0}=[(\opt{\be})^\top, (\opt{\bt})^\top, (\opt{\bq})^\top,h]^\top$. We then appropriately scale them, and follow the rotation recovery approach of \cref{sec:rot_recov}. The complete method is outlined in \cref{alg:c2p}. For more details, please refer to \cref{sec:supp_details}.

\PAR{Sufficient and necessary condition for global optimality.}
To alleviate the notation in the following proof, let us define the following auxiliary variables:
\begin{alignat}{3}
    \bXopte&\ceqq\bXopt_{[1:9,1:9]}~,  \quad \bXopttq&&\ceqq\bXopt_{[10:15,10:15]}~,\\
    \bXEtqh&\ceqq\bXopt_{[1:16,1:16]}~,\quad \btq&&\ceqq[\bt^\top, \bq^\top]^\top,
\end{alignat}
where $\bXopt_{[i:j,i:j]}$ represents the submatrix of $\bXopt$ extracted by selecting the rows and columns ranging from index $i$ through index $j$. 

\begin{theorem}\label{th:qcqp}
    The semidefinite relaxation of \crefprob{word:qcqp} is tight if and only if 
    $\rank(\bXEtqh)\in[1,3]$ and
    its submatrices $\bXopte, \bXopttq$ are rank-1.
\end{theorem}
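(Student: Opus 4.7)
The plan is to characterize tightness via the standard SDP lifting: any PSD matrix $\bXopt$ admits a decomposition $\bXopt = \sum_{j=1}^{r}\bv_j\bv_j^\top$ with $r=\rank(\bXopt)$, and I will show tightness is equivalent to being able to extract a single rank-1 QCQP minimizer from the block structure of $\bXopt$. Throughout, I exploit that $\bC_0$ is supported only on the $\be$-block, so $\trace(\bC_0\bXopt)=\trace(\bC\bXopte)$.

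For the $\Leftarrow$ direction, I would extract $\opt{\be}$ from the rank-1 factorization of $\bXopte$ and $(\opt{\bt},\opt{\bq})$ from that of $\bXopttq$. Feasibility of every QCQP constraint---the norms in \cref{eq:qcqp_essmat,eq:qcqp_essmat2}, the adjugate coupling $\Adj(\bE)=\bq\bt^\top$ from \cref{th:essmat}, and the disambiguation equalities \cref{eq:qcqp_rotdis,eq:qcqp_tdis,eq:qcqp_h}---is then verified block-by-block by matching the affine conditions $\trace(\bA_i\bXopt)=b_i$ against the rank-1 factors, which forces the bilinear terms to split correctly. Recovering an appropriate $\opt{h}$ and slack values from the diagonal of $\bXopt$, the triplet attains $\trace(\bC_0\bXopt)=(\opt{\be})^\top\bC\opt{\be}$, so the QCQP and SDP optima coincide.

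For the $\Rightarrow$ direction, tightness forces each summand $\bv_j$ in $\bXopt=\sum_j\bv_j\bv_j^\top$ (suitably normalized) to be a QCQP minimizer. By \cref{th:essmat}, the adjugate coupling uniquely determines $(\bt,\bq)$ from $\bE\in\ME$ up to a joint sign, so all $\btq$-blocks of the $\bv_j$ are collinear, yielding rank-1 $\bXopttq$. An analogous argument on the $\be$-block using $\Adj(\bE)=\bq\bt^\top$ combined with $\trace(\bE\bE^\top)=2$ forces the $\be$-blocks to be collinear, yielding rank-1 $\bXopte$. The only remaining freedom across summands lives in how $h$ can flip sign relative to $(\be,\bt,\bq)$ while preserving \cref{eq:qcqp_rotdis,eq:qcqp_tdis} through the non-negative slacks $s_r^2,s_t^2$; a direct enumeration of the compatible sign patterns bounds $\rank(\bXEtqh)\leq 3$.

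The main obstacle is precisely this upper rank bound: it requires a careful case analysis of which sign patterns of $h$ against $(\be,\bt,\bq)$ can coexist in the SDP optimal set without destroying the rank-1 structure of the individual blocks. The key algebraic leverage is that the disambiguation equalities \cref{eq:qcqp_rotdis,eq:qcqp_tdis} are inequalities encoded via squared slacks, so they selectively rule out certain sign combinations, capping the admissible directions at three and thereby matching the empirically observed behavior in \cref{fig:svals}.
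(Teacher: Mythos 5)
Your proposal follows essentially the same route as the paper's proof: both directions rest on expressing $\bXEtqh$ as a combination of rank-1 terms whose $\be$- and $\btq$-blocks agree up to sign (the paper posits the three sign-flipped solutions directly, you derive the block collinearity from the adjugate coupling of Th.~\ref{th:essmat}), and the converse extracts a feasible minimizer from the rank-1 factors exactly as in the paper's adaptation of Zhao et al.'s Theorem~2. One remark: the ``careful case analysis of sign patterns'' you flag as the main obstacle is not actually needed for the bound $\rank(\bXEtqh)\le 3$ --- once block collinearity is in hand, every column of a factor $\bL$ with $\bXEtqh=\bL\bL^\top$ lies in the three-dimensional span of $[\be^\top,\mathbf{0}^\top,0]^\top$, $[\mathbf{0}^\top,\btq^\top,0]^\top$ and $[\mathbf{0}^\top,\mathbf{0}^\top,h]^\top$, so the bound is immediate, and the paper's enumeration of the feasible combinations $(\be,\pm\btq,\pm h)$ serves only to exhibit which of these directions can actually occur.
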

\begin{proof}
    For the \emph{only if} part, assume the relaxation is tight. Then, $\bXopt$ is contained in the convex hull of the linearly independent rank-1 solutions to the relative pose problem \cite{Briales_2018_CVPR}. 
    We can form at most three feasible linearly-independent solutions that minimize the cost equally\footnote{Recall that the correct solution, besides minimizing the cost term, satisfies the cheirality constraints, and is the singular vector associated with the highest singular value, as commented in \cref{sec:recovery}.}, \eg:
    \begin{equation}
        \bx_0\ceqq
        \begin{bmatrix}
            \be \\
            \btq \\
            h
        \end{bmatrix},~
        \bx_1\ceqq
        \begin{bmatrix}
            \be \\
            -\btq \\
            h
        \end{bmatrix},~
        \bx_2\ceqq
        \begin{bmatrix}
            \be \\
            -\btq \\
            -h
        \end{bmatrix}~,
        \label{eq:linindep_sols}
    \end{equation}
    Note that $\bt$ and $\bq$ must share the same sign to be feasible. Thus, the convex combination ($\alpha_i\in\bb{R}^{+}, \sum_i\alpha_i=1,~i\in\{0, 1, 2\}$) that forms the globally optimal matrix can be expressed as:
    \begin{align}
        \bXEtqh&=\alpha_0\bx_0\bx_0^\top
        + \alpha_1\bx_1\bx_1^\top
        + \alpha_2\bx_2\bx_2^\top, \\
        &=\begin{bmatrix}
            \be\be^\top 
            & a_0\be\btq^\top 
            & a_1 h \be \\
            a_0\btq\be^\top 
            & \btq\btq^\top
            & a_2 h \btq \\
            a_1 h \be^\top
            & a_2 h \btq^\top
            & 1
        \end{bmatrix},
    \end{align}
    where $a_0 \ceqq (\alpha_0-\alpha_1-\alpha_2)$, $a_1 \ceqq (\alpha_0+\alpha_1-\alpha_2)$, $a_2 \ceqq (\alpha_0-\alpha_1+\alpha_2)$.
    Therefore, $\rank(\bXEtqh)\in[1,3]$ and $\rank(\bXopte)=\rank(\bXopttq)=1$.
    
    For the \emph{if} part, we build upon \cite[Theorem 2]{zhao2022nonmin}. Since $\bXopt$ is a positive semidefinite (PSD) matrix, $\bXopte$ and $\bXopttq$ are also PSD as they are principal submatrices of $\bXopt$ \cite{strang1980its}. Given that $\bXopte$ and $\bXopttq$ are both rank-1 matrices, it follows that there exist two vectors $\opt{\be}\in\bb{R}^9$ and $\opt{\btq}\in\bb{R}^6$ that fulfill the primal problem's constraints and satisfy $\opt{\be}(\opt{\be})^\top=\bXopte$ and $\opt{\btq}(\opt{\btq})^\top=\bXopttq$.
    
    Regarding the rank of $\bXEtqh$, since it is PSD, it can be factorized as $\bXEtqh=\bL\bL^\top$, where $\bL\in\bb{R}^{16\times r}$ and $r\ceqq\rank(\bXEtqh)$. Thus, to satisfy the rank-1 property of $\bXopte$ and $\bXopttq$, each column $k$ of $\bL$ must be given by: $[a_k\be^\top,b_k\btq^\top, c_k h]^\top$, for some scalars $a_k,b_k,c_k\in\bb{R}$. This constraint limits the rank of $\bXEtqh$ to at most $3$, as any additional column in $\bL$ would be a linear combination of the existing ones. Therefore, since $\bXEtqh$ must be feasible, this implies that $\rank(\bXEtqh)\in[1,3]$ and that it is a convex combination of the three linearly independent solutions, stemming from our parameterization of the problem, and thus the relaxation is tight.
\end{proof}

\begin{figure}
    \centering
    \includegraphics[width=0.75\linewidth]{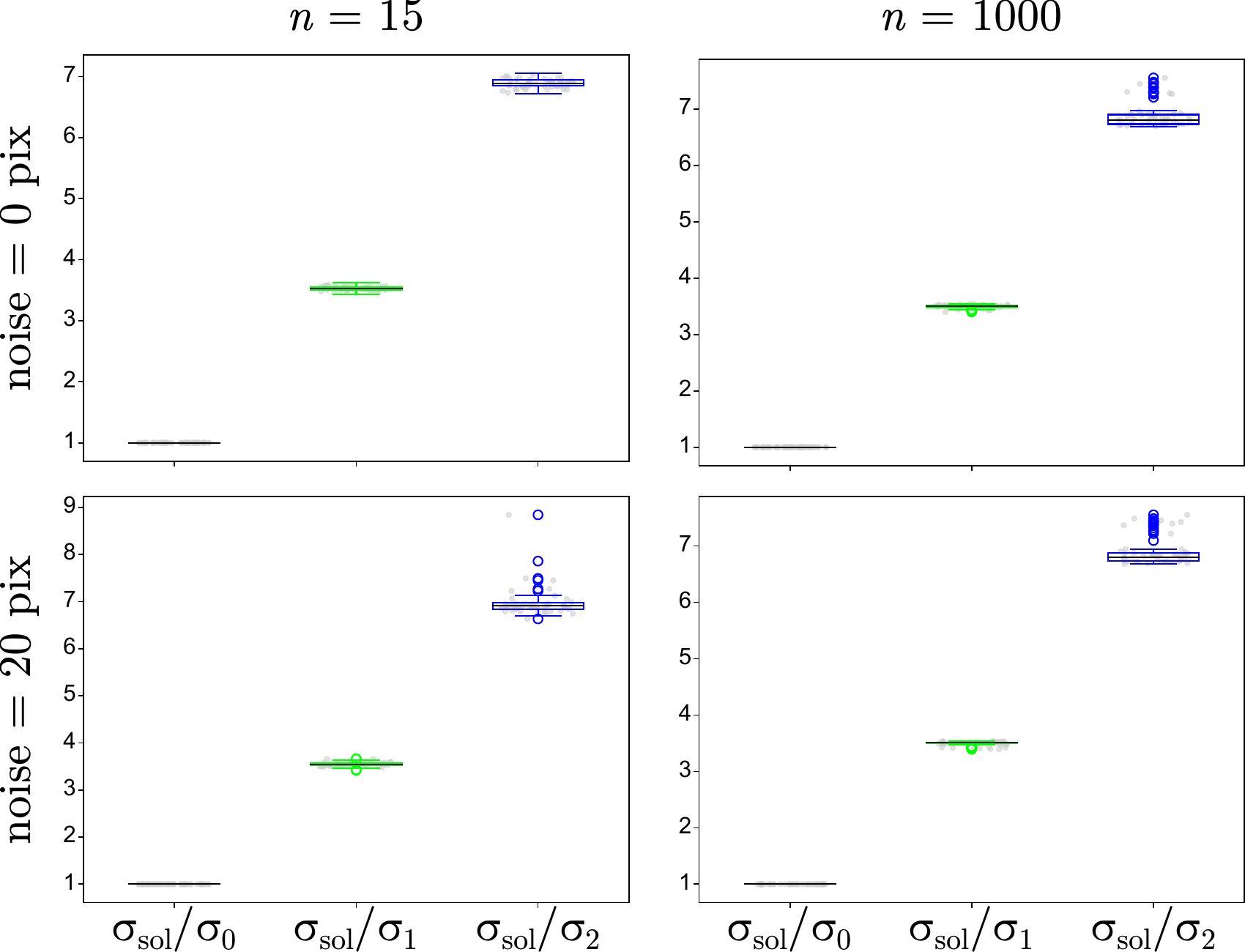}
    \caption{\textbf{The solution is found in the dominant singular vector.} We show, across different levels of noise and number of correspondences (we repeat each experiment 100 times), boxplots for the ratios $\sigma_{\text{sol}}/\sigma_0$ (left box), $\sigma_{\text{sol}}/\sigma_1$ (middle box), $\sigma_{\text{sol}}/\sigma_2$ (right box). $\sigma_{\text{sol}}$ corresponds to the  singular value whose vector contains the estimate closest to the ground-truth. $\sigma_0, \sigma_1, \sigma_2$ represent, in order ($\sigma_0>\sigma_1>\sigma_2$) the top-three singular values (the rest are close to zero). As can be seen, the solution vector consistently corresponds to the dominant singular vector \ie $\sigma_{\text{sol}}/\sigma_0=1$. Therefore, 
    we can directly select the dominant vector to recover the solution.}
    \label{fig:svals}
\end{figure}

\begin{algorithm}
\algrenewcommand\algorithmicrequire{\textbf{Input:}}
\algrenewcommand\algorithmicensure{\textbf{Output:}}
\newcommand{\mycomment}[1]{\textcolor{gray}{\# #1}}
\newcommand{\myvar}[1]{{\tt #1}}
\caption{\name: Relative pose without disambiguation}\label{alg:c2p}
\begin{algorithmic}[1]
    \Require List of correspondences $\{\corr_{0,i}, \corr_{1,i}\}_{i=1}^{n}$, threshold $\epst\in\bb{R}^{+}$, optional weights $\{w_i\}_{i=1}^{n}$
    \Ensure $\opt{\bE}\in\ME$, geometrically correct relative pose $(\opt{\bR}\in\SOthree,\opt{\bt}\in\Stwo)$, \myvar{certif}, \myvar{is\_pure\_rot}
    \State \mycomment{Precomputed constraint elements in \cref{eq:sdp_constraints} form}
    \State $\{(\bA_i, b_i)\}_{i=1}^{m-2} \gets $ \cref{eq:qcqp_essmat,eq:qcqp_essmat2,eq:qcqp_h,eq:redundant}
    \State \mycomment{Cost matrix and data-dependent constraint matrices}
    \State $\bC, \bA_{r}, \bA_{t} \gets$ \cref{eq:matC,eq:qcqp_rotdis,eq:qcqp_tdis}
    \State \mycomment{Solve SDP and recover geometrically-valid rel. pose}
    \State $\bXopt \gets \Call{SDP}{\bC, \{(\bA_i, \bbb_i)\}_{i=1}^{m-2}, \bA_{r}, \bA_{t}}$ 
    \hfill\mycomment{\eg \cite{yamashita2010sdpa7}}
    \State $\opt{\bE},~\opt{\bt},~\opt{\bq},~s_t \gets \Call{Eig}{\bXopt}$ 
    \hfill\mycomment{\cref{sec:recovery,sec:supp_scaling}}
    \State \mycomment{Global-optimality and (near) pure-rotation certificates}
    \State \myvar{certif} $\gets \Call{Bool}{\bXopt\text{ meets Th. \ref{th:qcqp} rank conditions}}$
    \State \myvar{is\_pure\_rot} $\gets \Call{Bool}{s_t^2 < \epst}$
    \State \mycomment{Improve numerical accuracy---if needed (\cref{sec:supp_num_acc})}
    \If{$s_t^2 < 10^{-4}$}
        \State $\opt{\bE},~\opt{\bt},~\opt{\bq} \gets \Call{SignedEig}{\bXopt_{[:15,:15]}}$
        \hfill\mycomment{\cref{sec:supp_num_acc}}
    \EndIf
    \State $\opt{\bR} \gets \Call{RecoverRotation}{\opt{\bE},\opt{\bt},\opt{\bq}}$ 
    \hfill\mycomment{\cref{sec:rot_recov}}
    \State \Return $\opt{\bE},~\opt{\bR},~\opt{\bt}$, \myvar{certif}, \myvar{is\_pure\_rot}
\end{algorithmic}
\end{algorithm}

\section{Experiments}\label{sec:experiments}

\begin{figure}[!h]
    \centering
    \includegraphics[width=0.9\linewidth]{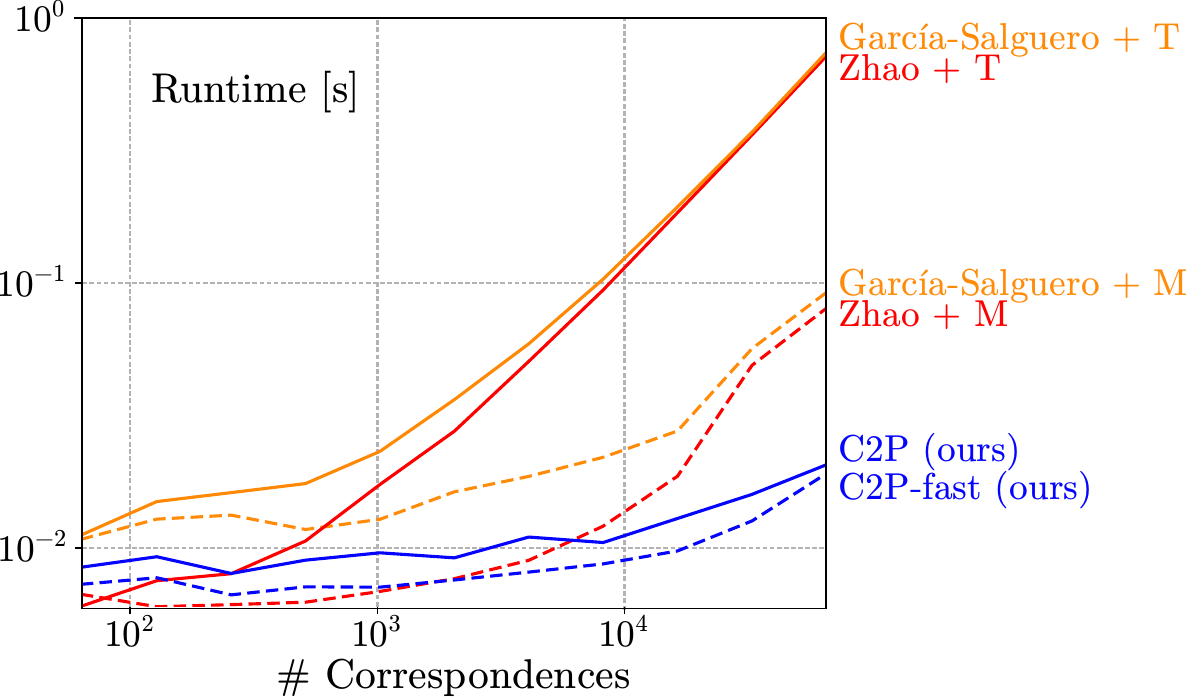}
    \caption{\textbf{Run time vs number of correspondences.} We compare the execution time (in sec.) of \name against \citet{zhao2022nonmin} and \citet{garciasalguero2022tighter}. Unlike \name, \cite{zhao2022nonmin, garciasalguero2022tighter} need a post-processing step to disambiguate the four valid candidate poses. For this, we use two methods: (\textbf{T}) the classic cheirality check \cite{Hartley2004mvg}, 
    that triangulates the points and checks for positive-depths,
    and (\textbf{M}) A faster alternative, that avoids triangulation and instead checks \cref{eq:midpoints}.
    \nameFast and \cite{zhao2022nonmin} + M, are the fastest when the number of correspondences is low, and there exist small difference ($<2$ ms) if we use redundant constraints (\name). 
    However, for $n>10^{3}$ (common in dense matchers \cite{truong2023pdcnetpami, Edstedt2023dkm, edstedt2023roma}) the disambiguation step starts dominating the runtime of \cite{zhao2022nonmin} + M, while both versions of our method (\name and \nameFast) present up to 4x and 35x times better runtimes w.r.t. the fastest, and slowest alternative, respectively.}
    \label{fig:runtime}
\end{figure}

\begin{figure*}[!ht]
    \centering
    \begin{tblr}{
        colspec={cc},
        colsep=10pt,
        rowsep=0pt,
    }
        \includegraphics[width=8cm]{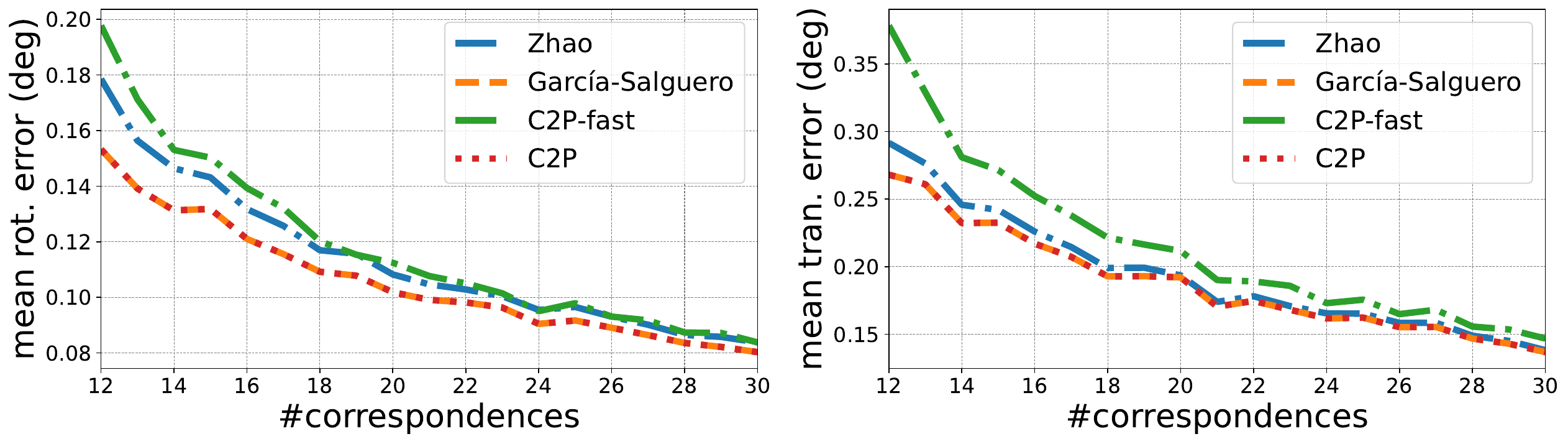} &
        \includegraphics[width=8cm]{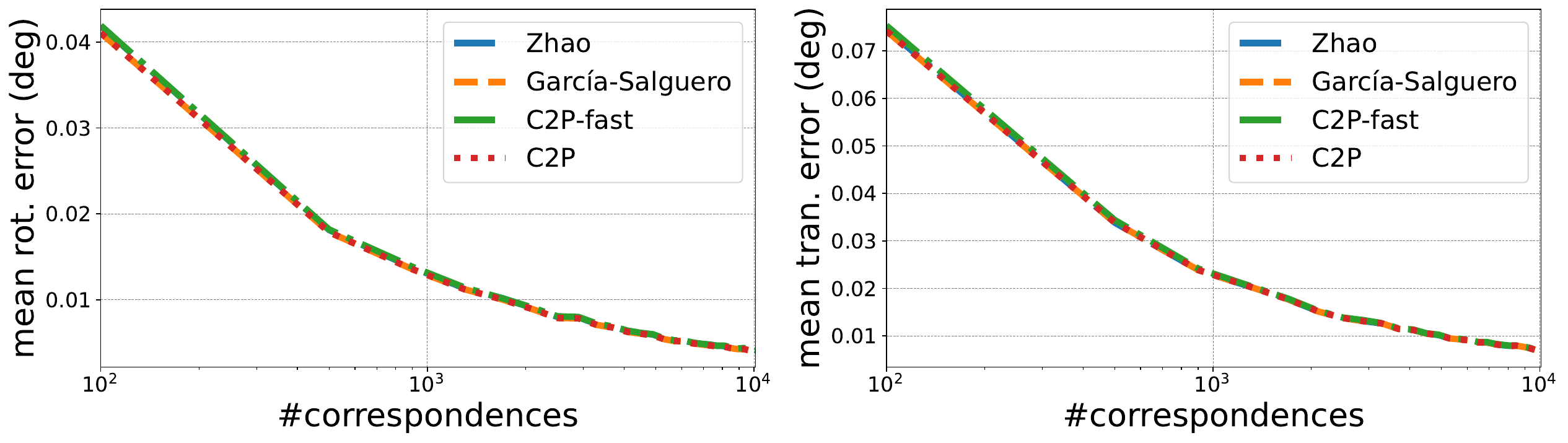}\\
        (a) Regime R1: $n\in[12, 30]$ 
        & 
        (b) Regime R2: $n\in[10^2, 10^4]$
    \end{tblr}
    \caption{\textbf{Accuracy vs number of correspondences.} (a) For small $n$, both \name and the method of \citet{garciasalguero2022tighter} perform the best, while \name is faster than \cite{garciasalguero2022tighter}. We argue that redundant constraints, such as those leveraged in these methods, are particularly helpful in this regime. (b) As $n$ increases (right) both our \name and \nameFast scale better than \cite{zhao2022nonmin,garciasalguero2022tighter} while having the same accuracy.}
    \label{fig:acc_vs_n}
\end{figure*}

To solve \crefprob{word:sdp}, we use \method{SDPA} \cite{fujisawa2002sdpa62,yamashita2010sdpa7}, a solver for general SDPs based on primal-dual interior-point methods. For the experiments we use an Intel Core i5 CPU at 3.00 GHz. 
For comparison with non-minimal works, we consider the globally optimal methods of \cite{zhao2022nonmin,garciasalguero2022tighter}.

\PAR{Synthetic data.} Following \cite{Briales_2018_CVPR,zhao2022nonmin,garciasalguero2022tighter,Karimian2023essential} we test our method with synthetic experiments. To simulate the scenes and cameras, we follow the procedure of \cite{zhao2022nonmin}. Specifically, we set the absolute pose of one camera to the identity. For the other camera, the direction of its relative translation is uniformly sampled with a maximum magnitude of 2, and the relative rotation is generated with random Euler angles bounded to
0.5 radians in absolute value. This generates random relative poses as they would appear in practical situations. We uniformly sample point correspondences around the origin with a distance varying between 4 and 8 and then transform each to the reference frames of the cameras to obtain the unit bearing vectors. We add noise to the bearings assuming a spherical camera \ie we extract the tangent plane of each bearing and add uniformly distributed random noise expressed in pixels inside this plane.

\PAR{Real data.} Following \cite{zhao2022nonmin,garciasalguero2022tighter}, we additionally consider the six sequences from the dataset \cite{strecha2008benchmark}. We generate 97 wide-baseline image pairs by grouping adjacent images. For each image pair, we extract correspondences with SIFT features \cite{lowe2004distinctive}. Then, we use RANSAC to filter out wrong correspondences. These results are consistent with the synthetic scenarios and we provide them in \Cref{sec:supp_exp} due to space limitations.

\begin{figure}
    \centering
    \includegraphics[width=0.6\linewidth]{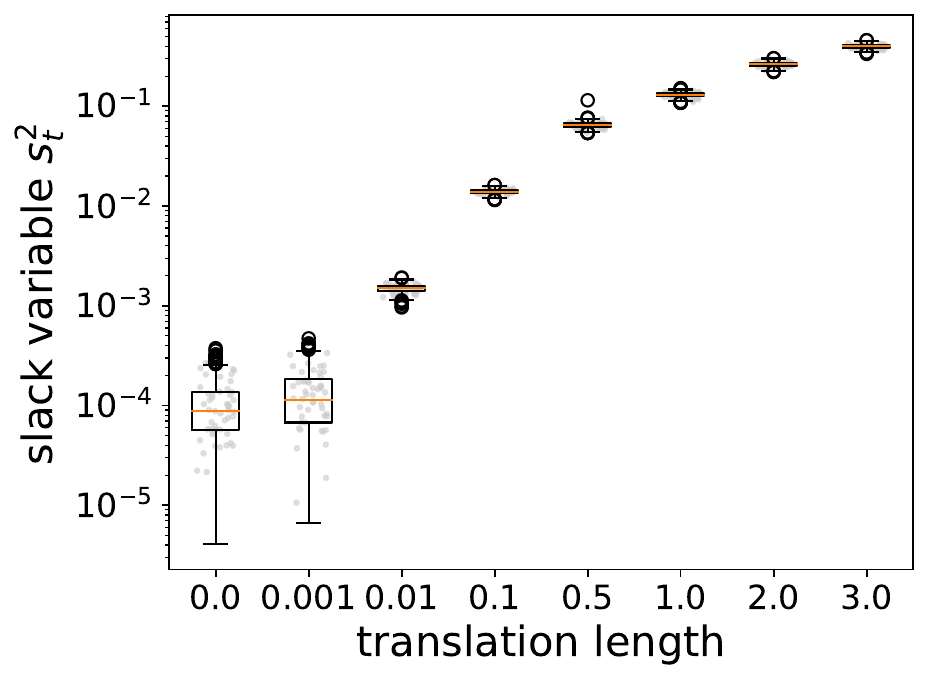}
    \caption{\textbf{$s_t^2$ as pure rot. metric.} As the translation magnitude decreases, the accuracy of the estimated $\bt$, declines due to its diminished impact on the cost function. Our slack variable, $s_t^2$, allows us to directly identify when the magnitude of $\bt$ is $\leq10^{-3}$ of relative to the scene's scale through  
    simple thresholding on $s_t^2$.%
    }
    \label{fig:pure_rot}
\end{figure}

\PAR{Execution time}
Although our main contributions are theoretical, our method has the advantage of scaling better with the number of correspondences, thanks to not requiring posterior disambiguation. We compare our runtimes with those of \cite{garciasalguero2022tighter,zhao2022nonmin} in \cref{fig:runtime} for varying number of points. Since \cite{garciasalguero2022tighter,zhao2022nonmin} need to disambiguate the four candidate poses satisfying the same epipolar geometry, we consider two techniques: (\textbf{T}) classic triangulation of the correspondences, followed by a positive-depth check \cite{Hartley2004mvg} (we use OpenCV's {\small \texttt{recoverPose}} \cite{opencv_library} for this) and, for fairer comparison, we also consider a method (\textbf{M}) that avoids triangulation and is based on checking the estimated sign of the norms computed with the midpoint method \cite{beardsley1994navigation,lee2019bmvctriang}. Specifically, we select the camera pose that satisfies the most:
\begin{equation}
    \resizebox{0.88\linewidth}{!}{%
        ${\displaystyle
            (\bR\bff_1\times\bff_0)
            \cdot
            (\bff_0\times\bt)>0~,\quad
            (\bR\bff_1\times\bff_0)
            \cdot
            (\bR\bff_1\times\bt)>0~,%
        }$}%
        \label{eq:midpoints}
\end{equation}
for all correspondences. A geometric derivation of \cref{eq:midpoints} is found in \cite{lee2019bmvctriang}. In \Cref{sec:supp_deriv_midpoints} we provide an algebraic derivation. As can be seen, \nameFast and \cite{zhao2022nonmin} + M, are the fastest methods when the number of correspondences is low. Additionally, there exist small difference ($<2$ ms) w.r.t. \name which includes redundant constraints. 
However, for $n>10^{3}$ (common in dense matchers \cite{truong2023pdcnetpami, Edstedt2023dkm, edstedt2023roma}) the disambiguation starts dominating the runtime of \cite{zhao2022nonmin} + M, while both \name and \nameFast present better scaling.

\PAR{Accuracy vs number of correspondences}
We first test the accuracy of the methods w.r.t. the number of correspondences ($n$). To better visualize their behavior, we consider two regimes: R1) $n\in[12, 30]$ and R2) $n\in[10^2,10^4]$). We fix the noise to $1$ pixel (the same conclusions hold at different noise levels, as shown in \cref{sec:supp_exp}). For both regimes we repeat the experiments 1000 times. We set the step size of $n$ to 1 in R1 and 400 in R2. In \cref{fig:acc_vs_n}, we report the mean rotation: $\arccos(0.5(\trace(\bR_{\text{true}}^\top,\bR)-1))$ and translation errors $\arccos(\bt_{\textrm{true}}^\top\bt)$ in degrees, where $\bR_{\text{true}}, \bt_{\textrm{true}}$ is the ground-truth pose. From this experiment, we conclude that redundant constraints help to improve the accuracy when $n$ is low, as in R1), both \name and \cite{garciasalguero2022tighter}, outperform \cite{zhao2022nonmin}, while our \name is faster (\cref{fig:runtime}). In R2) all methods are equally accurate, while \name and \nameFast start becoming the fastest methods. In practice, we can easily switch between \name and \nameFast based on $n$, thus achieving a good balance in speed and accuracy when compared to the alternatives.

\PAR{Pure rotations}
In this experiment, we verify our method's efficacy under near-pure rotational motions, which is known to be challenging \cite{Briales_2018_CVPR,kneip2012finding}. Besides, the slack variable $s_t^2$, corresponding to \cref{eq:qcqp_tdis}, enables the detection of such motions through simple thresholding. Intuitively, \cref{eq:qcqp_tdis} corresponds to \cref{eq:dis_t1} and this inequality becomes $0$ under pure rotations ($\bff_0=\bR\bff_1$ in this case). In this experiment, we vary the translation magnitude and do 1000 repetitions for each, setting the noise to $0.5$ pixels. Results in \Cref{sec:supp_exp}, show that the rotation accuracy is unaffected by the translation magnitude, but as this decreases, the estimate $\bt$ worsens since the minimization of the epipolar errors become more insensitive to $\bt$. \name behaves similarly as \cite{garciasalguero2022tighter,zhao2022nonmin}, while \name has the advantage of directly identifying near-pure rotations by just checking $s_t^2$, thus avoiding extra steps such as the required in \cite{zhao2022nonmin} (see \cref{fig:pure_rot}).

\section{Conclusion and Limitations}\label{sec:conclusion}
In this paper, we introduced \name, a novel method for estimating the relative pose that, for the first time in the literature, does not need posterior disambiguation.
Our approach efficiently constrains the parameter space during optimization using the necessary and sufficient geometric and manifold constraints, resulting in better runtime with more correspondences and a better balance between accuracy and efficiency compared to the alternatives. Additionally, our method is certifiably globally optimal and can directly detect near-pure rotational motions. \name, however, as of now, cannot deal with outliers, which is left for future work. 

{\footnotesize \PAR{Acknowledgements.} This work was supported by the Ministerio de Universidades Scholarship FPU21/04468, the Spanish Government (PID2021-127685NB-I00 and TED2021-131150B-I00) and the Arag\'on Government (DGA T45\_23R). \par}

{
    \small
    \bibliographystyle{ieeenat_fullname}
    \bibliography{main}
}

\clearpage
\setcounter{page}{1}
\maketitlesupplementary

\appendix

\iftoggle{cvprfinal}{In this supplementary material, we provide additional details of our method in \Cref{sec:supp_details}, proofs in \Cref{sec:supp_tightness,sec:supp_deriv_midpoints} and experiments in \Cref{sec:supp_exp}.}{In this supplementary material, we provide additional details, proofs and experiments. Please note that we have included the ``References'' section again at the end, as new references are used in the supplementary.}

\section{Additional details}\label{sec:supp_details}

\subsection{Averaging of data-dependent constraints}\label{app:supp_avg}
We provide here compact expressions for averaging the data-dependent coefficients of the quadratic terms stemming from \cref{eq:qcqp_rotdis,eq:qcqp_tdis}. We will use the notation $\ba^{(j)}$ to refer to the $j-th$ element of a vector $\ba$.

\PAR{Rotation.}
For clarity, we recall \cref{eq:qcqp_rotdis}:
\begin{align}
    \aug{\bff}_1\bE^\top\skeww{\bt}\aug{\bff}_0 - s_r^2 = 0~,\label{eq:qcqp_rotdis_supp}
\end{align}
The quadratic terms for one correspondence $(\bff_0, \bff_1)$ are:
\begin{equation}
\resizebox{0.84\linewidth}{!}{%
    ${\displaystyle
    \begin{aligned}
          &\bff_0^{(0)}\bff_1^{(0)}\be^{(3)}\bt^{(2)} + \bff_0^{(0)}\bff_1^{(1)}\be^{(4)}\bt^{(2)} + \bff_0^{(0)}\bff_1^{(2)}\be^{(5)}\bt^{(2)} \\
        +~&\bff_0^{(1)}\bff_1^{(0)}\be^{(6)}\bt^{(0)} + \bff_0^{(1)}\bff_1^{(1)}\be^{(7)}\bt^{(0)} + \bff_0^{(1)}\bff_1^{(2)}\be^{(8)}\bt^{(0)} \\
        +~&\bff_0^{(2)}\bff_1^{(0)}\be^{(0)}\bt^{(1)} + \bff_0^{(2)}\bff_1^{(1)}\be^{(1)}\bt^{(1)} + \bff_0^{(2)}\bff_1^{(2)}\be^{(2)}\bt^{(1)} \\
        -~&\bff_0^{(0)}\bff_1^{(0)}\be^{(6)}\bt^{(1)} - \bff_0^{(0)}\bff_1^{(1)}\be^{(7)}\bt^{(1)} - \bff_0^{(0)}\bff_1^{(2)}\be^{(8)}\bt^{(1)} \\
        -~&\bff_0^{(1)}\bff_1^{(0)}\be^{(0)}\bt^{(2)} - \bff_0^{(1)}\bff_1^{(1)}\be^{(1)}\bt^{(2)} - \bff_0^{(1)}\bff_1^{(2)}\be^{(2)}\bt^{(2)} \\
        -~&\bff_0^{(2)}\bff_1^{(0)}\be^{(3)}\bt^{(0)} - \bff_0^{(2)}\bff_1^{(1)}\be^{(4)}\bt^{(0)} - \bff_0^{(2)}\bff_1^{(2)}\be^{(5)}\bt^{(0)} \\
        -~&s_r^2 = 0~,%
    \end{aligned}
    }$}%
    \label{eq:rot_coeff_terms}
\end{equation}
where, $\be\ceqq\vc(\bE^\top)$, as defined in the main paper.

With this ordering, the coefficients of the first nine terms of \cref{eq:rot_coeff_terms} (from $\bff_0^{(0)}\bff_1^{(0)}$ to $\bff_0^{(2)}\bff_1^{(2)}$) can be computed as $\vc(\bff_1\bff_0^\top)$,
which is a nine-dimensional vector (one element per coefficient). Thus, the averaging of the terms across $n$ correspondences $\{\corr_{0,i}, \corr_{1,i}\}_{i=1}^{n}$ can be expressed as:
\begin{equation}
    \vc\left(\frac{1}{n}\sum_{i=1}^n\bff_{1,i}\bff_{0,i}^\top\right) \label{eq:avg_rot}~.
\end{equation}
Furthermore, the subsequent nine quadratic terms have the same (but negated) coefficients. Thus, the values of \cref{eq:avg_rot} can be reused for these coefficients.

\PAR{Translation.} For clarity, we recall \cref{eq:qcqp_tdis}:
\begin{equation}
    h\aug{\bff}_0^\top\bt - h\aug{\bff}_1^\top\bq - s_t^2 = 0~,
\end{equation}
The quadratic terms for one correspondence $(\bff_0, \bff_1)$ are:
\begin{equation}
\begin{alignedat}{3}
      & h\bff_0^{(0)}\bt^{(0)} ~&&+~ h\bff_0^{(1)}\bt^{(1)} ~&&+~ h\bff_0^{(2)}\bt^{(2)} \\
    - & h\bff_1^{(0)}\bq^{(0)} ~&&-~ h\bff_1^{(1)}\bq^{(1)} ~&&-~ h\bff_1^{(2)}\bq^{(2)} \\
    - & s_t^2 = 0
\end{alignedat}
\end{equation}
In this case the coefficients of the quadratic terms are directly given by the bearings. Thus, the average for the first three coefficients can be computed as $1/n \sum_i^n\bff_0$, and as $-1/n \sum_i^n\bff_1$ for the subsequent three coefficients.

\subsection{Appropriate scaling of the solution estimates}\label{sec:supp_scaling}
As explained in \cref{sec:recovery}, we extract the solution estimates from the dominant singular vector, denoted as $\bv_0$, of $\bXEtqh\ceqq\bXopt_{[1:16,1:16]}$. Following the ordering and notation of the main paper, this corresponds to 
$\bv_0=[(\opt{\be})^\top, (\opt{\bt})^\top, (\opt{\bq})^\top, h]^\top$.
However, the norm constraints enforced during the optimization, namely $\bt^\top\bt=1$, $\bq^\top\bq=1$ and $\trace(\bE\bE^\top)=2$, apply to $\bXopt$ and not to $\bv_0$. Consequently, we cannot assume that the elements of this dominant vector will be scaled appropriately even after multiplying it with its singular value. The solution to this is straightforward: we separately normalize the vectors $\bt$ and $\bq$ to make them unit vectors, and scale $\be\ceqq\vc(\bE^\top)$ such that its nonzero singular values equal $1$ (in practice, we use the SVD of $\bE$ for greater precision). 
Finally, we leverage the absence of products between the slack variable $s_t$ and the rest of the parameters in \crefprob{word:qcqp} to directly read $s_t^2$ from its corresponding diagonal entry in $\bXopt$, thus avoiding the need to factorize $\bXopt$ to obtain its value.

\begin{figure*}
    \centering
    \begin{tblr}{
            colspec={ccc},
            rowsep=1pt,
            colsep=8pt,
        }
        \includegraphics[height=3cm]{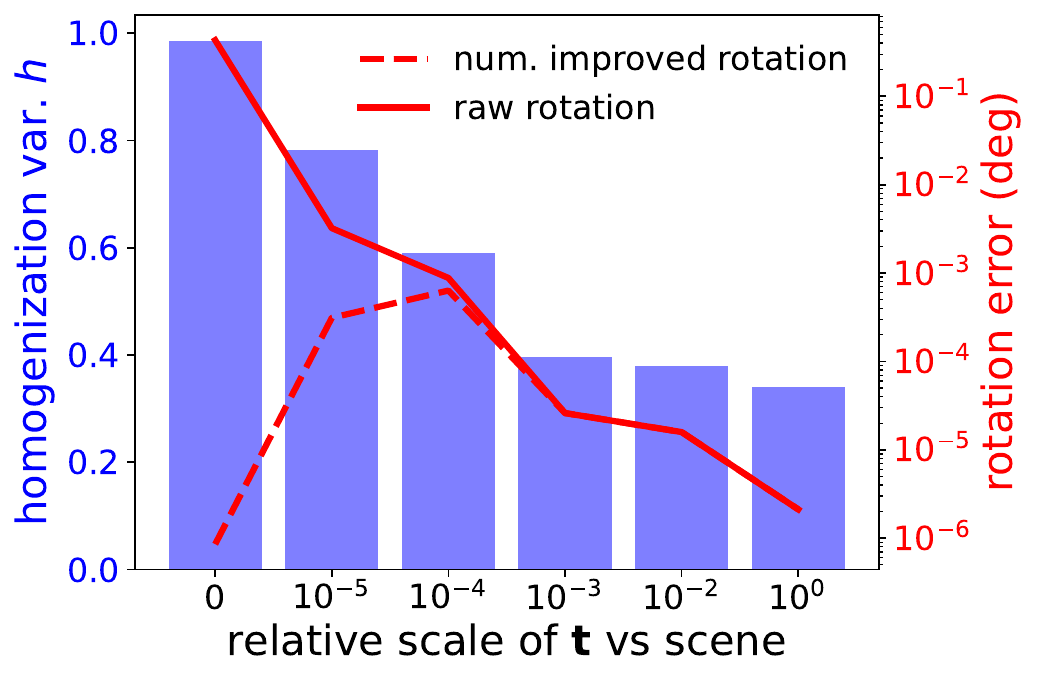} &
        \includegraphics[height=3cm]{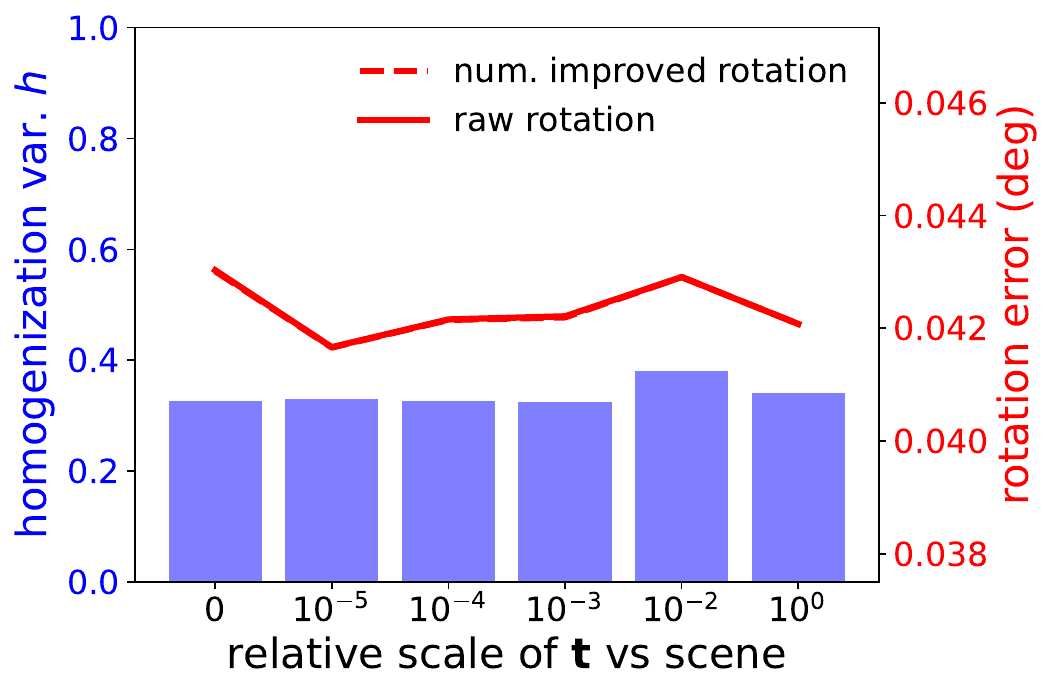} & 
        \includegraphics[height=3cm]{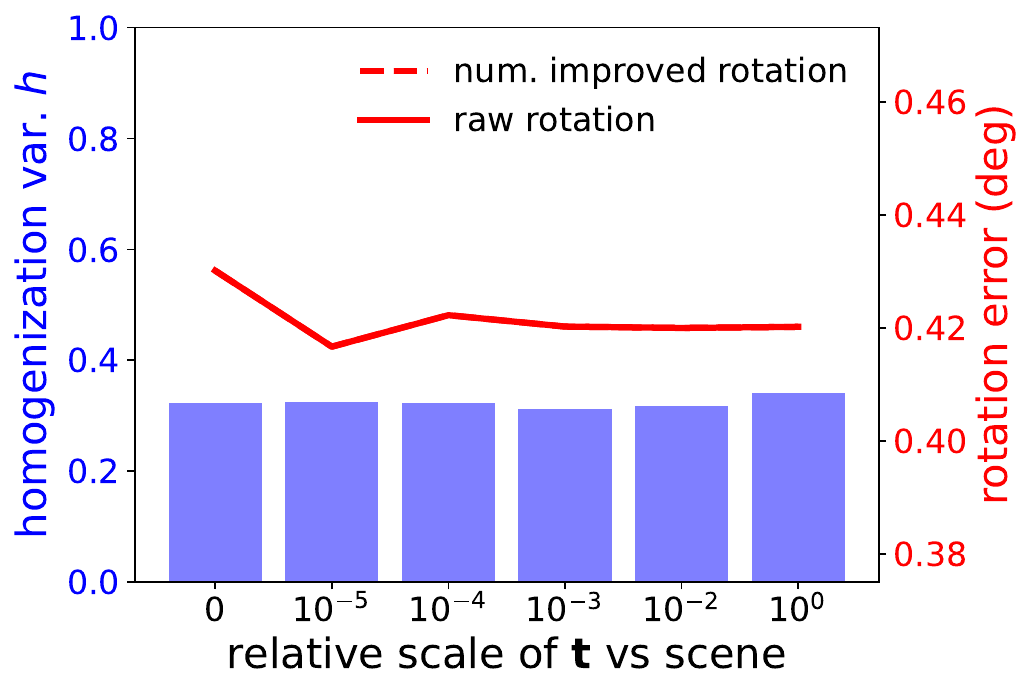} \\
        \footnotesize{(a) noise = 0 pix} & \footnotesize{(b) noise = 1 pix} & \footnotesize{(c) noise = 10 pix}
    \end{tblr}
    \caption{\textbf{Noise-free pure rotation scenarios.} For our method, an edge case consists of noise-free pure rotational motions. In noise-free scenarios (a) the estimate of $h$ within the dominant singular vector---which contains the rest of the solution estimates---approaches $1$ in near-pure rotational motions (when the relative scale is $<10^{-4}$), negatively affecting the numerical accuracy of the other estimates. As can be seen, we effectively address this by using the dominant singular vector from the submatrix excluding $h$. However, despite the effectiveness of this solution, this numerical issue is not present in practical scenarios (b), (c), where noise affects the observations. These visualizations depict results averaged across 1000 different random instances of the same synthetic scenarios considered in the main paper.}
    \label{fig:homog_vs_t}
\end{figure*}

\subsection{Pure rotations and numerical accuracy}\label{sec:supp_num_acc}

Under pure rotations, considering an optimal essential matrix, $\opt{\bE}$, any pair of translation vectors $\bt, \bq\in\Stwo$ satisfying the definition $\bq\ceqq\bR^\top\bt$ will minimize the sum of squared epipolar errors. Here, $\bR\in\SOthree$ represents one of the two rotation matrices corresponding to $\opt{\bE}$. Given that both $\bt, \bq$ belong to $\Stwo$, one might expect to find two additional singular vectors corresponding to nonzero singular values, in addition to the three singular vectors metioned in \cref{sec:recovery}. However, we empirically verified that four additional singular vectors appear instead. We observed the same phenomenon in \cite{zhao2022nonmin,garciasalguero2022tighter}. This phenomenon likely occurs because the constraints apply only to the optimal matrix $\bXopt$. Therefore, the elements in the singular vectors of $\bXopt$ do not need to satisfy the norm constraints (\eg that $\bt,\bq$ belong to $\Stwo$) to still minimize the cost function. This may explain the similar behavior noted in \cite{Briales_2018_CVPR} regarding pure rotations. 

Importantly, in our case,  the correct solution can be extracted from the dominant singular vector thanks to \cref{eq:qcqp_rotdis}, which enforces a larger singular value corresponding to the vector containing the solution. However, for pure rotations and in absence of noise, the component in the (unit) singular vector corresponding to the homogenization variable, $h$, dominates the rest, being close to $\sim1$. This predominance reduces the numerical accuracy of the other estimates ($\be,\bt$ and $\bq$). Since this behavior is only present in a noise-free scenario, we can use a strict threshold in the slack variable $s_t^2$ (we use $10^{-4}$) to detect such scenario. Consequently, only in this case, we extract the solution from the dominant singular vector of the submatrix corresponding only to $\be,\bt$ and $\bq$, leveraging the previous numerically-inaccurate solution to just correct the sign of this new numerically-accurate solution, if necessary. This behavior is shown in \cref{fig:homog_vs_t}.

\section{Tightness of \cite{zhao2022nonmin} when the SDP solution is rank-2}\label{sec:supp_tightness}

In \cite[Eq. 11]{zhao2022nonmin} the following QCQP is considered:
\PAR{Problem \setword{QCQP-Z}{prob:qcqp-z}}
\begin{align}
    \min_{\bE,\bt} \quad& \be^\top\bC\be~, \\
    \textrm{s.t.} \quad & 
    \bE\bE^\top = \skeww{\bt}\skeww{\bt}^\top,
    \quad\bt^\top\bt=1~.
\end{align}
The tightness conditions in \cite[Th. 2]{zhao2022nonmin} assume that tightness of the semidefinite relaxation imply $\rank(\bXopt)=1$, where $\bXopt$ represents the optimal solution of the SDP. In this section, we adapt \cref{th:qcqp} to extend \mbox{\cite[Th. 2]{zhao2022nonmin}} and show that \crefprob{prob:qcqp-z} can also be tight when $\rank(\bXopt)=2$.

With a similar notation as in the main paper, let us define the following auxiliary variables:
\begin{equation}
    \bXopte\ceqq\bXopt_{[1:9,1:9]}~,  \quad 
    \bXoptt\ceqq\bXopt_{[10:12,10:12]}~.
\end{equation}

\begin{theorem}
    The semidefinite relaxation of \crefprob{prob:qcqp-z} is tight if and only if $\rank(\bXopt)\in[1, 2]$, and its submatrices $\bXopte$ and $\bXoptt$ are rank-1.
\end{theorem}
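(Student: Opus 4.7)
The plan is to mirror the structure of the proof of Theorem \ref{th:qcqp} but with a simpler parameter space, since \crefprob{prob:qcqp-z} involves only $\bE$ and $\bt$ (no $\bq$, no homogenization variable $h$), and the feasible set is invariant under the sign flip $\bt \mapsto -\bt$ because $\skeww{\bt}\skeww{\bt}^\top = \skeww{-\bt}\skeww{-\bt}^\top$ and the cost depends only on $\bE$.

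For the \emph{only if} direction, I would invoke the same convex-hull argument used in the main theorem (following \cite{Briales_2018_CVPR}): if the relaxation is tight, $\bXopt$ lies in the convex hull of the rank-1 lifts of feasible optimal primal solutions. Because the cost term is independent of $\bt$ and the only constraint coupling $\bE$ and $\bt$ is quadratic in $\bt$ and hence sign-invariant, the set of linearly independent optimal solutions has cardinality at most two:
\begin{equation}
    \bx_0 \ceqq \begin{bmatrix}\be \\ \bt\end{bmatrix},\qquad
    \bx_1 \ceqq \begin{bmatrix}\be \\ -\bt\end{bmatrix}.
\end{equation}
Writing $\bXopt = \alpha_0 \bx_0\bx_0^\top + \alpha_1 \bx_1 \bx_1^\top$ with $\alpha_0,\alpha_1 \geq 0$, $\alpha_0+\alpha_1 = 1$, a direct block computation gives
\begin{equation}
    \bXopt = \begin{bmatrix} \be\be^\top & (\alpha_0-\alpha_1)\,\be\bt^\top \\ (\alpha_0-\alpha_1)\,\bt\be^\top & \bt\bt^\top \end{bmatrix},
\end{equation}
from which $\bXopte = \be\be^\top$ and $\bXoptt = \bt\bt^\top$ are manifestly rank-1 and $\rank(\bXopt) \in [1,2]$ (rank-1 exactly when $\alpha_0 \in \{0,1\}$).

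For the \emph{if} direction, I would proceed exactly as in the \emph{if} part of Theorem \ref{th:qcqp}. Since $\bXopt \in \PSD^{12}$, principal submatrices $\bXopte$ and $\bXoptt$ are PSD \cite{strang1980its}; being rank-1 PSD, they admit factorizations $\bXopte = \opt{\be}(\opt{\be})^\top$ and $\bXoptt = \opt{\bt}(\opt{\bt})^\top$ for some $\opt{\be}\in\bbR^9$, $\opt{\bt}\in\bbR^3$ that inherit feasibility for \crefprob{prob:qcqp-z} from the SDP constraints (norm of $\bt$ and the matrix equality $\bE\bE^\top = \skeww{\bt}\skeww{\bt}^\top$, both of which descend to the extracted vectors). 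Factorizing $\bXopt = \bL\bL^\top$ with $\bL \in \bbR^{12 \times r}$, $r = \rank(\bXopt)$, the rank-1 constraints on the diagonal blocks force every column of $\bL$ to have the form $[a_k (\opt{\be})^\top,\, b_k (\opt{\bt})^\top]^\top$ for scalars $a_k, b_k \in \bbR$, so $\bL$ spans at most a two-dimensional column space, giving $r \leq 2$. Since $\opt{\bx} \ceqq [(\opt{\be})^\top, (\opt{\bt})^\top]^\top$ attains the same cost as $\bXopt$ on the primal problem, the relaxation is tight.

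The main obstacle, as in the parent theorem, is the \emph{if} direction, specifically the step that bounds the rank of $\bXopt$ by constraining the form of the columns of its PSD factor using rank-1 conditions on the diagonal blocks. Once that structural argument is set, the rest reduces to a direct block-matrix computation and the observation that the sign ambiguity in $\bt$ is the only source of multiplicity in the optimizer set of \crefprob{prob:qcqp-z}, which is why the bound is $2$ instead of the $3$ obtained for \crefprob{word:qcqp}.
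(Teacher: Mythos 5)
Your proposal is correct and follows essentially the same route as the paper's proof: the same convex-hull decomposition over the two sign-flipped solutions $[\be^\top,\pm\bt^\top]^\top$ for the \emph{only if} direction, and the same PSD-factorization argument constraining the columns of $\bL$ via the rank-1 diagonal blocks for the \emph{if} direction. Your explicit remark that the sign-invariance of $\skeww{\bt}\skeww{\bt}^\top$ is what caps the multiplicity at two (versus three in the parent theorem) is a helpful clarification but does not change the argument.
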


\begin{proof}
    \newcommand{\vect}{\begin{bmatrix}
            \be \\
            \bt
        \end{bmatrix}}
    \newcommand{\vecf}{\begin{bmatrix}
            \be \\
            -\bt
        \end{bmatrix}}
    For the \emph{only if} direction, assume the relaxation is tight. Then, following \cite{Briales_2018_CVPR}, we can find $\bXopt$ in the convex hull of the linearly independent rank-1 solutions to the relative pose problem\footnote{Note that the outer products of the negative counterparts, $[-\be^\top,-\bt^\top]$ and $[-\be^\top,\bt^\top]$, are not included, as they yield the same outer product.}:
    \begin{align}
        &\opt{\bX} \ceqq 
        \alpha_0\bx_0\bx_0^\top + \alpha_1\bx_1\bx_1^\top, \label{eq:outer_ref}\\
        &\bx_0 \ceqq \vect, \quad
        \bx_1 \ceqq \vecf,
    \end{align}
    where $\alpha_0,\alpha_1$ are non-negative scalars such that $\alpha_0+\alpha_1=1$. This last condition ensures that the cost is optimal, \ie, $\trace(\bC_0\bXopt)=\be^\top\bC\be$, and that the resulting matrix $\bXopt$ is feasible. To see this, we can expand \cref{eq:outer_ref}:
    \begin{equation}
        \opt{\bX} = 
        \begin{bmatrix}
            (\alpha_0+\alpha_1)\be\be^\top 
            & (\alpha_0-\alpha_1)\be\bt^\top\\
            (\alpha_0-\alpha_1)\bt\be^\top & 
            (\alpha_0+\alpha_1)\bt\bt^\top
        \end{bmatrix}~,\label{eq:XZhao_expand}
    \end{equation}
    to verify that $\alpha_0+\alpha_1=1$ is needed to satisfy the norm constraint $\bt^\top\bt=1$ (the rest of the constraints are satisfied for any valid combination of $\alpha_0$ and $\alpha_1$). This reveals that when the semidefinite relaxation is tight, the diagonal (upper-left and bottom-right) block matrices are rank-1 and that $\rank(\bXopt)\in\{1,2\}$. Specifically\footnote{In practice, off-the-shelf SDP solvers \cite{sturm1999sedumi, toh1999sdpt3, yamashita2010sdpa7} return a rank-2 block-diagonal solution \cite{garciasalguero2022tighter}, which corresponds to setting $\alpha_0=\alpha_1=0.5$ in \cref{eq:outer_ref,eq:XZhao_expand}.}, $\rank(\bXopt)=1$ when $\alpha_0=0$ and $\alpha_1=1$ or when $\alpha_0=1$ and $\alpha_1=0$. Otherwise $\rank(\bXopt)=2$.

    For the \emph{if} part, we build upon \cite[Theorem 2]{zhao2022nonmin}. Since $\bXopt$ is a positive semidefinite (PSD) matrix, $\bXopte$ and $\bXoptt$ are also PSD as they are principal submatrices of $\bXopt$ \cite{strang1980its}. Given that $\bXopte$ and $\bXoptt$ are both rank-1 matrices, it follows that there exist two vectors $\opt{\be}\in\bb{R}^9$ and $\opt{\bt}\in\bb{R}^3$ that fulfill the primal problem's constraints and satisfy $\opt{\be}(\opt{\be})^\top=\bXopte$ and $\opt{\bt}(\opt{\bt})^\top=\bXoptt$.

    Regarding the rank of $\bXopt$, since it is PSD, it can be factorized as $\bXopt=\bL\bL^\top$, where $\bL\in\bb{R}^{12\times r}$ and $r\ceqq\rank(\bXopt)$. Thus, to satisfy the rank-1 property of $\bXopte$ and $\bXoptt$, each column $k$ of $\bL$ must be given by: $[a_k\be^\top,b_k\bt^\top]^\top$, for some scalars $a_k,b_k\in\bb{R}$. This constraint limits the rank of $\bXopt$ to at most $2$, as any additional column in $\bL$ would be a linear combination of the existing ones. Therefore, since $\bXopt$ must be feasible, this implies that $\rank(\bXopt)\in[1,2]$ and that it is a convex combination of the two linearly independent solutions, stemming from \cref{eq:outer_ref}, and thus the relaxation is tight.
\end{proof}

\section{Algebraic derivation of  \Cref{eq:midpoints}}\label{sec:supp_deriv_midpoints}

Given estimates of the relative rotation and translation $(\bR, \bt)$, and a correspondence $(\bff_{0}, \bff_{1})$, the midpoint method triangulates the corresponding 3D point $\bp\in\bb{R}^3$. It identifies this point as the midpoint (mean) of the common perpendicular to the two rays originating from the bearings \cite{beardsley1994navigation}. Specifically, it determines the norms $\lambda_0,\lambda_1\in\bb{R}$ of the 3D points, $\bp_0\ceqq\lambda_0\bff_0, \bp_1\ceqq\lambda_1\bff_1$, in each camera reference system, that minimize the squared error $\lVert\bp_0 - (\bR\bp_1+\bt)\rVert^2$:
\begin{equation}
    \lambda_0, \lambda_1 =
    \arg\min_{\lambda_0, \lambda_1} 
    \lVert 
    \lambda_0\bff_0 - (\lambda_1\bR\bff_1 + \bt)
    \rVert^2~.\label{eq:min_midpoints}
\end{equation}
If the 3D points and their midpoint (mean) satisfy the cheirality constraints, 
both norms $\lambda_0$ and $\lambda_1$ will be positive. Otherwise, at least one of the norms will be estimated as negative \cite{tron2017essential}. As will be shown, it is not necessary to explicitly compute $\lambda_0$ and $\lambda_1$ to estimate their signs.

The rays of ideal, noise-free correspondences meet in a 3D point, satisfying $\lambda_0\bff_0 - \lambda_1\bR\bff_1 = \bt$, or in matrix form:
\begin{align}
    \underbrace{\begin{bmatrix}
        \bff_0 & -\bR\bff_1
    \end{bmatrix}}_{\bA\in\bb{R}^{3\times2}}
    \begin{bmatrix}
        \lambda_0 \\
        \lambda_1
    \end{bmatrix}
    = \bt
\end{align}
In practice, we minimize the squared errors. As such, an equivalent solution to \cref{eq:min_midpoints} is given as the solution to the system $\bA^\top\bA~ [\lambda_0,~\lambda_1]^\top=\bA^\top\bt$:
\begin{equation}
    \begin{bmatrix}
        1 & -\bff_0^\top\bR\bff_1\\
        -\bff_0^\top\bR\bff_1 & 1\\
    \end{bmatrix}
    \begin{bmatrix}
        \lambda_0 \\
        \lambda_1
    \end{bmatrix} = 
    \begin{bmatrix}
        \bff_0^\top \\
        -(\bR\bff_1)^\top
    \end{bmatrix}
    \bt~,
    \label{eq:midpoints_lin}
\end{equation}
where we have used that $(\bff_k)^\top\bff_k=1,~k\in\{0, 1\}$ since $\bff_0,~\bff_1\in\Stwo$. Expanding \Cref{eq:midpoints_lin} leads to:
\begin{align}
    \lambda_0 - \lambda_1\bff_0^\top\bR\bff_1 &= \bff_0^\top\bt~,\\
    \lambda_1 - \lambda_0\bff_0^\top\bR\bff_1 &= -(\bR\bff_1)^\top\bt~.
\end{align}
which leads to the equivalent equations:
\begin{alignat}{3}
    s^2\lambda_1 =&& 
    -(\bR\bff_1)^\top\bt &+ (\bff_0^\top\bR\bff_1)(\bff_0^\top\bt)~,\label{eq:mid_aux1}\\
    s^2\lambda_0 =&& 
    \bff_0^\top\bt &- (\bff_0^\top\bR\bff_1)((\bR\bff_1)^\top\bt)~.\label{eq:mid_aux2}
\end{alignat}
where 
\begin{equation}
    s^2 \ceqq 1 - (\bff_0^\top\bR\bff_1)^2
    =\sin^2\angle(\bff_0, \bR\bff_1)~.
\end{equation}
Since $s^2\geq0$, this implies that the RHS of \cref{eq:mid_aux1,eq:mid_aux2} must be positive for $\lambda_0,\lambda_1$ to be positive too:
\begin{alignat}{3}
    &&-(\bR\bff_1)^\top\bt &+ (\bff_0^\top\bR\bff_1)(\bff_0^\top\bt) &&> 0~,\label{eq:mid_aux3}\\
    &&\bff_0^\top\bt &- (\bff_0^\top\bR\bff_1)((\bR\bff_1)^\top\bt) &&> 0 ~,\label{eq:mid_aux4}
\end{alignat}
Lastly, to express \cref{eq:mid_aux3,eq:mid_aux4} in compact form, we can use the property of the cross product:
\begin{equation}
    (\ba\times\bbb)\cdot(\bc\times\bd)
    =
    (\ba\cdot\bc)(\bbb\cdot\bd)-(\ba\cdot\bd)(\bbb\cdot\bc)~,
\end{equation}
for any $\ba,\bbb,\bc,\bd\in\bb{R}^3$, and with $\ba\cdot\bbb=\ba^\top\bbb$ representing the dot product between any vectors $\ba,\bbb$. With this property, we reach the inequalities:
\begin{alignat}{3}
    (\bR\bff_1\times\bff_0)&\cdot(\bff_0\times\bt)&&>0~,\\
    (\bR\bff_1\times\bff_0)&\cdot(\bR\bff_1\times\bt)&&>0~,
\end{alignat}
corresponding thus to \cref{eq:midpoints}.

\section{Additional experiments}\label{sec:supp_exp}

\subsection{Accuracy vs noise and translation magnitude}
In \Cref{fig:supp_synthetic}, we show additional synthetic experiments. In \cref{fig:supp_synthetic}(a)-(e), we verify that the conclusions drawn in \cref{fig:acc_vs_n} are consistent across different of levels of noise. In regimes with a small number of points (\cref{fig:supp_synthetic}(a),(c)), our \name and \cite{garciasalguero2022tighter} perform the best. Notably, \name is faster than \cite{garciasalguero2022tighter} (see \cref{fig:runtime}) and slightly outperforms it in estimating translation (\cref{fig:supp_synthetic}(c)). In regimes with a large number of points, the accuracy of our faster version of \name is on-par with \name itself and \cite{garciasalguero2022tighter}. In \cref{fig:supp_synthetic}(e), we fix the number of points at $1000$ while varying the noise level and observe the same behavior. Finally, in \cref{fig:supp_synthetic}(f), we demonstrate that \name performs as well as \cite{zhao2022nonmin, garciasalguero2022tighter} when varying the scale of the translation w.r.t. the scene, and unlike them, \name is also capable of directly detecting near-pure rotational motions and does not need posterior disambiguation step.

\begin{figure*}
    \centering
    \begin{tblr}{
        colspec={cc},
        rowsep=2pt,
        colsep=2pt,
        }
        \includegraphics[height=2.4cm]{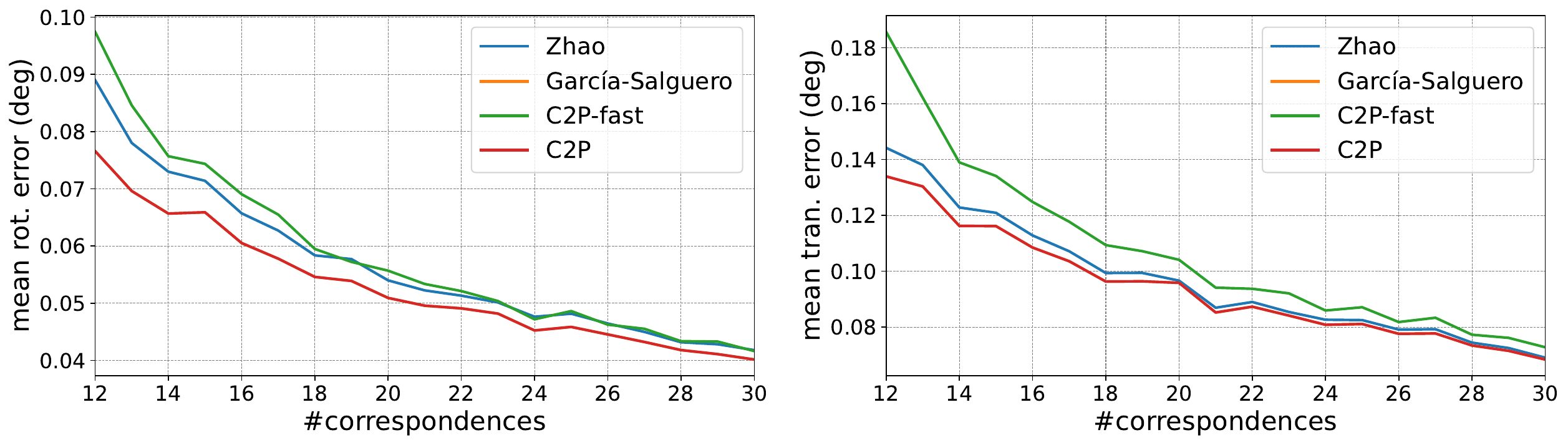} &
        \includegraphics[height=2.4cm]{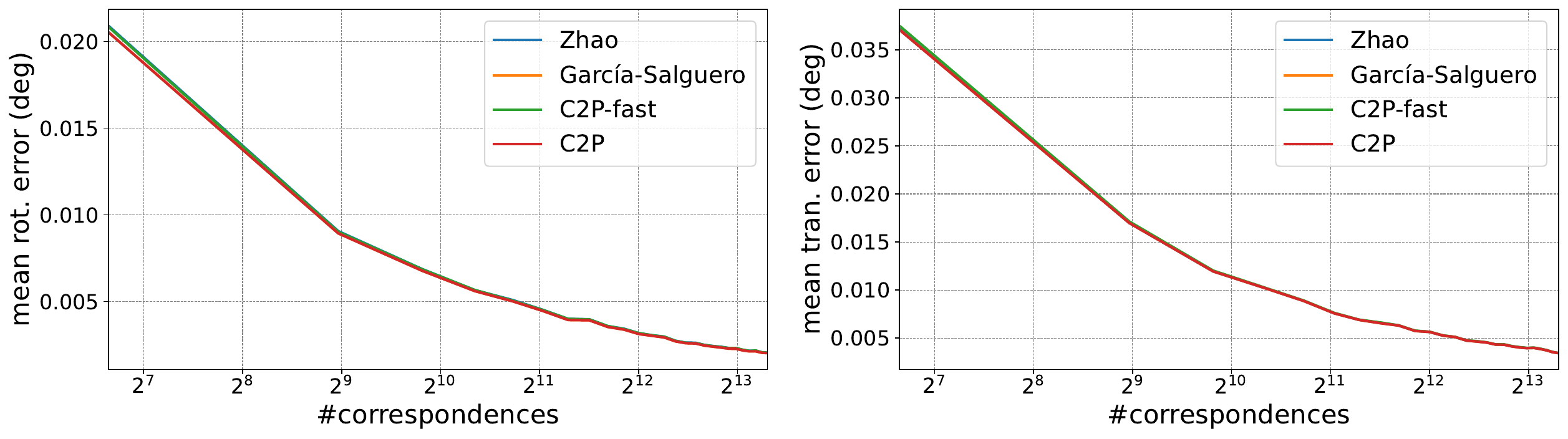} \\
        \footnotesize{(a) noise = $0.5$ pix, $n\in[12,30]$} & \footnotesize{(b) noise = $0.5$ pix, $n\in[10^2, 10^4]$} \\
        \includegraphics[height=2.4cm]{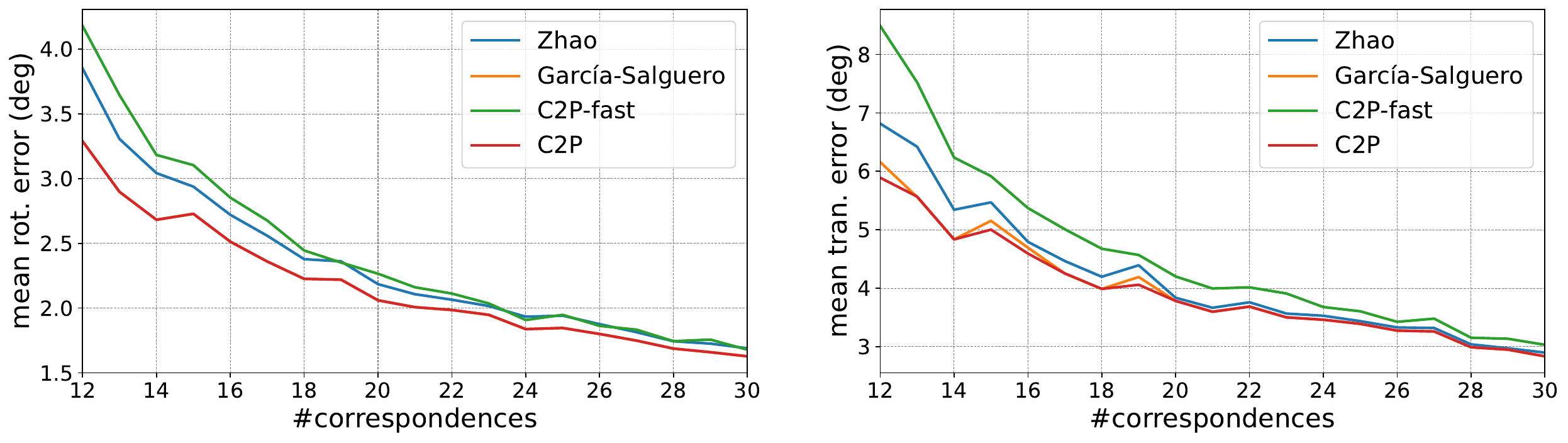} &
        \includegraphics[height=2.4cm]{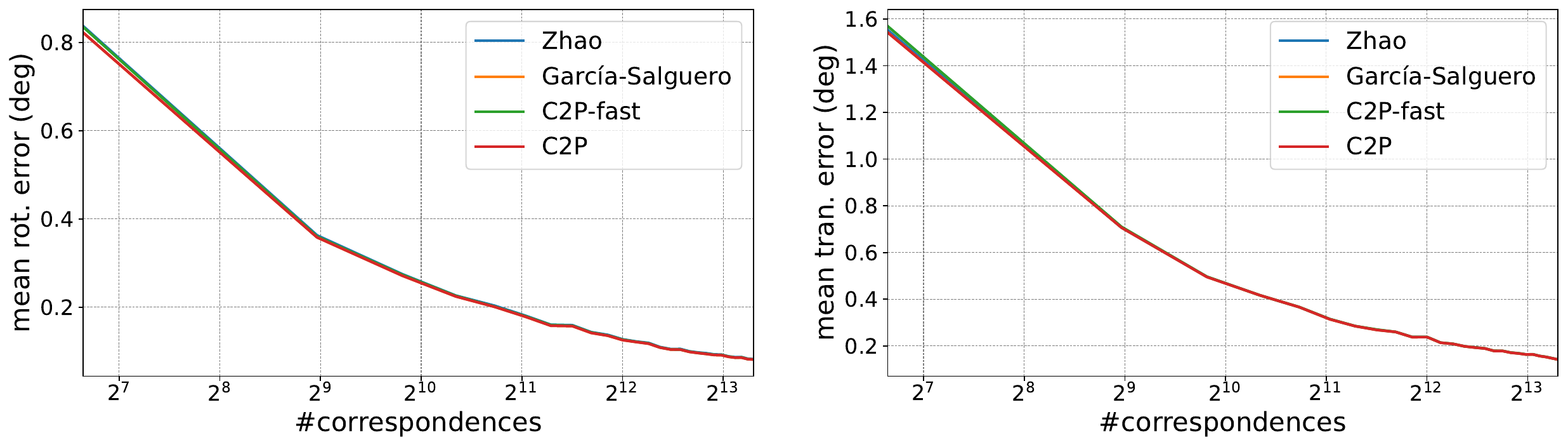} \\
        \footnotesize{(c) noise = $20$ pix, $n\in[12,30]$} & \footnotesize{(d) noise = $20$ pix, $n\in[10^2, 10^4]$} \\
        \includegraphics[height=4.4cm]{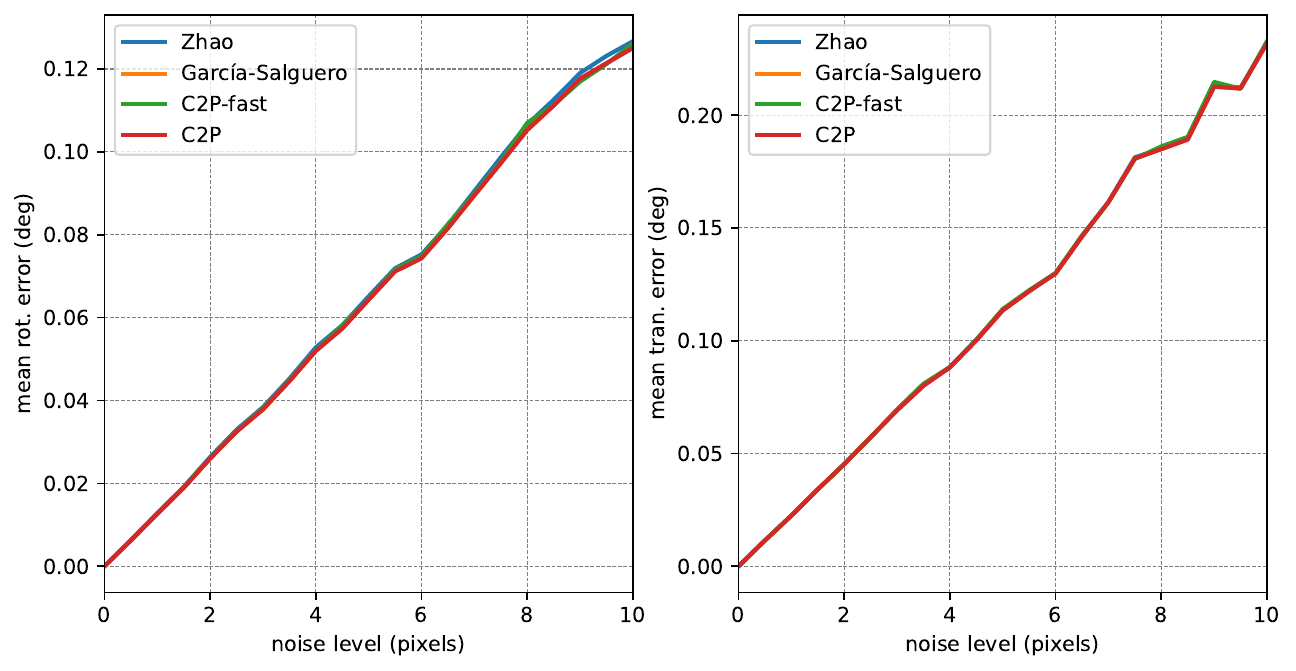} & 
        \includegraphics[height=4.4cm]{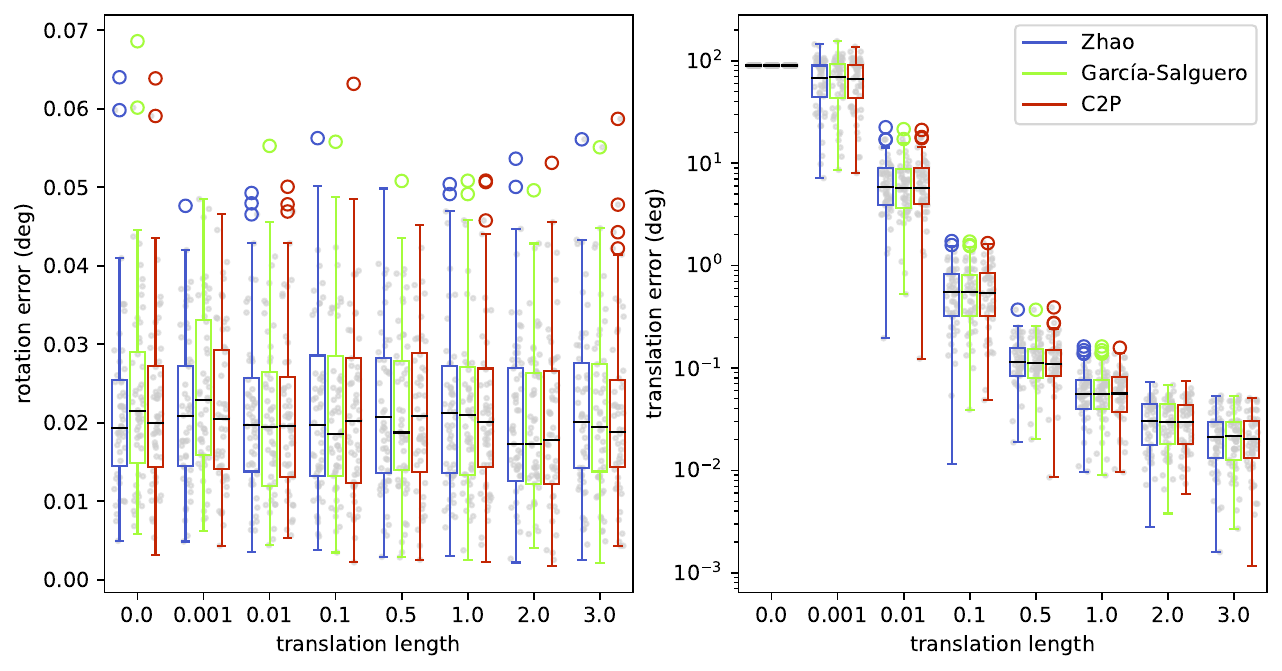} \\
        \footnotesize{(e) accuracy vs noise level and $n=1000$} & \footnotesize{(f) accuracy vs translation length}
    \end{tblr}
    \caption{\textbf{Additional synthetic experiments.} We evaluate our proposed \name and \nameFast under various conditions: (a)-(d) number of correspondences, (a)-(e) noise levels, and (f) relative translation scale w.r.t. scene. As shown in (b) and (d), \nameFast is well-suited for scenarios where $n>10^3$, performing on-par with \name and \cite{garciasalguero2022tighter, zhao2022nonmin}, while being faster (\cref{fig:runtime}). With fewer correspondences, as shown in (a) and (c), \name outperforms \cite{zhao2022nonmin}, slightly surpassing the accuracy of \cite{garciasalguero2022tighter} in estimating the translation, while also being faster. The same conclusions are reached when varying the noise levels (e). Finally, in (f) we show that \name performs as well as \cite{zhao2022nonmin, garciasalguero2022tighter} when varying the scale of the translation relative to the scene, and unlike them, \name is capable of directly detecting near-pure rotational motions.}
    \label{fig:supp_synthetic}
\end{figure*}

\subsection{Real-data}
Following \cite{zhao2022nonmin,garciasalguero2022tighter}, we test our method on all the sequences from \citet{strecha2008benchmark}. We generate 97 wide-baseline image pairs by grouping adjacent images. For each image pair, correspondences are extracted using DoG + SIFT \cite{lowe2004distinctive}. We then use the RANSAC implementation of OpenGV \cite{kneip2014opengv} to filter out wrong correspondences, setting the inlier threshold to $5$ pix, which we found sufficient given the images resolution of $3072\times2048$ pix. The performance of \name{\small\textsf{(-fast)}} and \cite{zhao2022nonmin,garciasalguero2022tighter} is shown in \cref{fig:strecha}. The results align with those from the synthetic experiments. \nameFast is the fastest among all the methods. However, \nameFast is not always tight, resulting in a slight loss of accuracy with respect to the alternatives. On the other hand, our \name is significantly more accurate than \cite{zhao2022nonmin} and is on-par with \cite{garciasalguero2022tighter}. Additionally, our \name is 40\% faster than \cite{garciasalguero2022tighter} on average.

\begin{figure*}
    \centering
    \includegraphics[width=10.5cm,valign=m]{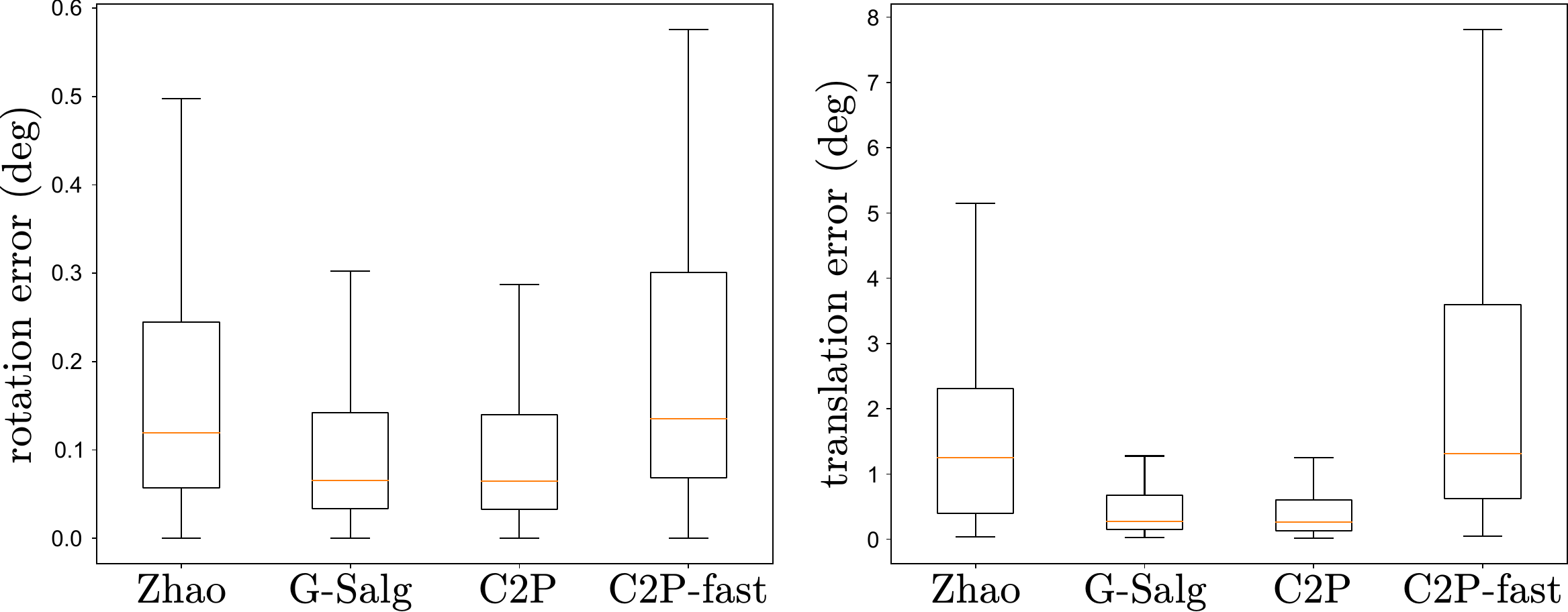}
    \hfill
    \begin{tblr}{
        colspec={lr},
        rowsep=1pt,
    }   
        \toprule
        Method      & Time (ms) \\
        \midrule
        \citet{zhao2022nonmin}            &  9.2 \\
        \citet{garciasalguero2022tighter} &  14.9 \\
        \name (ours)                      &  10.7 \\
        \nameFast (ours)                  &  9.0 \\
        \bottomrule
    \end{tblr}
    \vspace{-2mm}
    \caption{\textbf{Performance across all sequences (97 pairs) from \citet{strecha2008benchmark}}. (left) Relative rotation and translation errors (in degrees) for all image pairs. (right) Averaged execution times for computing the relative pose for each method. As can be seen, \nameFast is the fastest among all methods. However, \nameFast is not always tight, resulting in a slight loss of accuracy when compared to the alternatives. On the other hand, our \name is significantly more accurate than \citet{zhao2022nonmin} and is on-par with \citet{garciasalguero2022tighter} (labeled on the left as G-Salg.). Additionally, our \name is, on average, 40\% faster than \cite{garciasalguero2022tighter}.}
    \label{fig:strecha}
    \vspace{-4mm}
\end{figure*}

\newpage

\end{document}